\newtheorem{theorem}{Theorem}
\newtheorem{proposition}{Proposition}
\newtheorem{corollary}{Corollary}
\newcommand{\Omit}[1]{}
\newcommand{\calH}{{\cal H}}
\newcommand{\calB}{{\cal B}}
\newcommand{\E}{{\ensuremath{\mathrm{E}}}}
\newcommand{\NOT}[1]{{\mathsf{not\_}#1}}
\newcommand{\blsd}{\text{sdb}} 
\newcommand{\blsp}{\text{spb}} 
\newcommand{\blud}{\text{udb}} 
\newcommand{\blup}{\text{upb}} 
\newcommand{\rr}{\text{rr}} 
\newcommand{\crr}{\text{crr}} 
\newcommand{\sh}{\text{sh}} 
\newcommand{\pf}{\text{pf}} 
\newcommand{\bs}{\text{bs}} 
\newcommand{\tr}{\text{tr}} 
\newcommand{\EAX}{\text{EAX}}
\newcommand{\dom}{\ID{dom}}
\newcommand{\subterm}{\ID{sub}}
\newcommand{\myequal}{\ID{myequal}}
\newcommand{\BS}{\text{BS}\xspace}
\newcommand{\ba}{\begin{array}}
\newcommand{\ea}{\end{array}}
\begin{document}

\title{Blocking and Other Enhancements for
Bottom-Up Model Generation Methods
}

\author{Peter Baumgartner \and
        Renate A.\ Schmidt}

\thanks{P. Baumgartner,
              Data61, CSIRO,
               Australia 
           \\
              R. A. Schmidt,
              School of Computer Science,
              The University of Manchester,
              UK\\
}

\date{23 November 2016}

\maketitle

\begin{abstract}
Model generation is a problem complementary to theorem proving
and is important for fault analysis and debugging of formal
specifications of, for example, security protocols, programs and terminological
definitions.
This paper discusses several ways of enhancing the paradigm of bottom-up model
generation.
The two main contributions are new, generalized blocking techniques 
and a new range-restriction transformation.
The blocking techniques are based on simple transformations of the
input set together with standard equality reasoning and redundancy
elimination techniques.
These provide general methods for finding small, finite models.
The range-restriction transformation refines existing transformations
to range-restricted clauses by carefully limiting the creation of domain terms.
All possible combinations of the introduced techniques and a classical
range-restriction technique were tested on the clausal problems
of the TPTP Version 6.0.0 with an implementation based on the SPASS
theorem prover using a hyperresolution-like refinement.
Unrestricted domain blocking gave best results for satisfiable problems
showing it is a powerful technique indispensable for bottom-up model
generation methods.
Both in combination with the new range-restricting transformation,
and the classical range-restricting transformation, good results have
been obtained.
Limiting the creation of terms during the inference process by using
the new range restricting transformation has paid off, especially when
using it together with a shifting transformation.
The experimental results also show that classical range
restriction with unrestricted blocking provides a useful complementary
method.
Overall, the results showed bottom-up model generation methods were good for disproving theorems
and generating models for satisfiable problems, but less efficient than
SPASS in auto mode for unsatisfiable problems.

\end{abstract}

\section{Introduction}
\label{sec:introduction}

The bottom-up model generation (BUMG) paradigm encompasses a wide
family of calculi and proof procedures that explicitly try to construct
a model of a given clause set by reading clauses as
rules and applying them in a bottom-up way until completion.
For instance, variants of hyperresolution and grounding tableau calculi
belong to this family.
BUMG methods have been known for a
long time to useful for proving theorems, 
comparably little effort has however been undertaken to
exploit them for the dual task, namely,
computing models for satisfiable problems. 
This is somewhat surprising, as computing models is recognized as
being important in software engineering, model checking, and other
applications for fault analysis and debugging of logical specifications.

One of the contributions of the paper is the introduction to
first-order logic of blocking techniques partially
inspired by techniques already successfully used in
description and modal logic tableau-based theorem
proving~\citep{HustadtSchmidt99b,SchmidtTishkovsky07b,BaaderSattler01}.
We adapt and generalize these blocking techniques to full first-order
logic.
Blocking is an important technique for turning tableau systems into
decision procedures for modal and description logics.
Though different blocking techniques exist, and not all modal and
description logic tableau systems are designed to
return models, blocking is essentially a mechanism for
systematically merging terms in order to find finite models.

In our approach blocking is encoded on the clausal level and is
combined with standard resolution techniques, the idea being that
with a suitable prover small, finite models are constructed and can
be easily read off from the derived clauses.
Our blocking techniques are generic and pose no restrictions on the
logic they can be used for.
They can even be used for undecidable logics.
We introduce four different blocking techniques.
The main idea of our blocking techniques is that clauses are added to the input
problem which lead in the derivation to splittable clauses causing
terms in the partially constructed models to be merged.
The difference between the four techniques is how restrictive blocking is.
With \emph{unrestricted domain blocking} domain minimal models can be
generated. With \emph{subterm domain blocking} or \emph{subterm
predicate blocking} larger models are produced because two terms are only
merged if one is a subterm of the other.
With \emph{unrestricted predicate blocking} and \emph{subterm predicate
blocking} two terms are merged if they both belong to the extension of a
unary predicate symbol, the intention being that less constrained,
finite model can be found.

The second contribution of the paper is a refinement of
the well-known `transformation to range-restricted form' as
introduced in the eighties by~\citet{Manthey:Bry:SATCHMO:88}
in the context of the SATCHMO prover and later improved, for
example, by~\citet{Baumgartner:Furbach:Stolzenburg:RestartMEAnswers:AI:97}.
These range-restricting transformations have the disadvantage that they
generally force BUMG methods to enumerate the entire Herbrand universe
and are therefore non-terminating except in the simplest cases.
One solution is to combine classical range-restriction
transformations with blocking techniques.
Another solution, presented in this paper, is to modify the range-restricting transformation so
that new terms are created only when needed.
Our method extends and combines the range-restricting transformation
introduced in~\citet{SchmidtHustadt05b} for reducing first-order
formulae and clauses into range-restricted clauses, which was used
to develop general-purpose resolution decision procedures for the
Bernays-Sch\"onfinkel class.

Other methods for model computation can be classified as methods that
directly search for a finite model, such as the extended PUHR tableau
method of~\cite{BryTorge98},
the method of~\citet{Bezem:GeometricLogic:DisprovingWS:2005} and the
methods in the
SEM-family~\citep{Slaney:FINDER:TR:92,Zhang:SEM:IJCAI:95,McCune:MACE4:2003}.
In contrast, MACE-style model builders
such as, for example, the methods
of~\cite{Claessen:Soerensson:MACEimprove:ModelComputationWS:2003} and~\cite{McCune:DPFirstOrderModelSearch:TechRep:94}
reduce  model search to testing of propositional satisfiability. 
Being based on a translation, the MACE-style approach is 
conceptually related, but different to our approach. 
Both SEM- and MACE-style methods search for finite models, by 
essentially searching the 
space of interpretations with domain sizes $1,2,\ldots$, in increasing
order, until a model is found.

Our method operates significantly differently, as it is {\em not\/}
parameterized by a domain size.
Consequently, there is no requirement for iterative deepening over
the domain size, and the search for finite models works differently.
This way, we can address a problem often found
with models computed by these methods: from a pragmatic perspective,
they tend to identify too many terms.  For instance, for the two unit
clauses $\ID P(\ID a)$ and $\ID Q(\ID b)$ there is a model that
identifies $\ID a$ and $\ID b$ with the same object. Such models can
be counter-intuitive, for instance, in a description logic setting,
where unique names are often assumed, but not necessarily explicitly specified.
Furthermore, logic programs are typically understood with respect to Herbrand
semantics, and it is desirable to develop compatible model building
techniques.
We present transformations that are more careful at identifying
objects than the methods mentioned and thus work closer to a Herbrand
semantics.

The structure of the paper is as follows.
Definitions of basic terminology and notation can be found in
Section~\ref{sec:preliminaries}.
In Section~\ref{sec:BUMG} we recall the characteristic properties of
BUMG methods.
The main part of the paper are Sections~\ref{sec:transformations}
to~\ref{sec:experiments}.
Sections~\ref{sec:transformations}, \ref{sec:shifting}
and~\ref{sec:blocking} define new techniques for generating small
models and generating them more efficiently.
The techniques are based on a series of transformations including
a refined range-restricting transformation 
(Section~\ref{sec:range-restriction}), instances of standard
renaming and flattening (Section~\ref{sec:shifting}),
and the introduction of blocking in various forms through amendments of the clause set
and standard saturation-based equality reasoning
(Section~\ref{sec:blocking}).
Soundness and completeness of the blocking transformations and the
combined transformations is shown
in Section~\ref{sec:soundness:completeness}.
One consequence of the results is a general decidability result of the
Bernays-Sch\"onfinkel class for all
BUMG methods and related approaches.
This is presented in Section~\ref{sec:other}.
In Section~\ref{sec:experiments} we present and discuss results of
experiments carried out with our methods on clausal problems in
the TPTP library.

This paper is an extended and improved version
of~\citet{BaumgartnerSchmidt06}.

\section{Basic Definitions}
\label{sec:preliminaries}

We use standard terminology from automated reasoning. We assume as
given a signature $\Sigma = \Sigma_f \cup \Sigma_P$ of function
symbols~$\Sigma_f$ (including
constants) and predicate symbols~$\Sigma_P$. 
As we are working (also) with equality, we assume $\Sigma_P$ contains a
distinguished binary predicate symbol $\approx$, which is used in infix
form.  Terms, atoms, literals and formulas over $\Sigma$ and a given
(denumerable) set of variables $V$ are defined as usual.

A clause is a (finite) implicitly universally quantified disjunction
of literals. We write clauses in a logic-programming style,
that is, we write $H_1 \lor \cdots \lor H_m \gets B_1
\land \cdots \land B_k$ rather than $H_1 \lor \cdots \lor H_m \lor \lnot
B_1\lor \cdots \lor \lnot B_k$, where $m,k \geq 0$.
Each $H_i$ is called a {\em head atom\/}, and each $B_j$ is called a {\em body
  atom\/}.  When writing expressions such as $H \lor \calH \gets B \land \calB$ we
mean any clause whose head literals are $H$ and those in the
disjunction of literals $\calH$, and whose body literals are $B$ and
those in the conjunction of literals $\calB$.  A {\em clause set\/} is a
finite set of clauses.

A clause $\calH \gets \calB$ is said to be \emph{range-restricted}
iff the body~$\calB$ contains all the variables in it.
This means that a positive clause $\calH \gets \top$ is
range-restricted only if it is a ground clause.
A clause set is range-restricted iff it contains only range-restricted
clauses.

For a given atom $P(t_1,\ldots,t_n)$ the terms $t_1,\ldots,t_n$ are
also called the \emph{top-level terms} of $P(t_1,\ldots,t_n)$ ($P$ being
$\approx$ is permitted).
This notion generalizes to
clause bodies, clause heads and clauses as expected. For example, for a
clause $\calH \gets \calB$ the top-level terms of its body $\calB$ are
exactly the top-level terms of its body atoms.

A {\em proper functional term\/} is a term that is
  neither a variable nor a constant.

A \emph{(Herbrand) interpretation\/} $I$
is a set of ground atoms, namely, those
that are true in the interpretation.
Satisfiability/validity in a Herbrand interpretation of ground literals, clauses,
and clause sets
is defined as usual. Also, as usual, a
clause set stands semantically for the set of all its ground instances.
We write $I \models F$ to denote that $I$ satisfies $F$, where
$F$ is a ground literal or a (possibly non-ground) clause (set).

An \emph{E-interpretation} is an interpretation that is also a
congruence relation on the terms in the signature.
If $I$ is an interpretation, we denote by $I^E$ the smallest congruence
relation on the terms that includes $I$, which is an E-interpretation.
An E-interpretation does not necessarily need to be a
Herbrand-E-interpretation and is a standard first-order interpretation $I$
such that $(I,\mu) \models s \approx t$ if and only if $(I,\mu)(s) =
(I,\mu)(t)$ (where $\mu$ is a valuation, that is, a mapping from the
variables to the domain $|I|$ of $I$).
We say that $I$ {\em E-satisfies\/} $F$
iff $I^\E \models F$.
Instead of $I^\E \models F$ we write $I \models_\E F$.

It is well-known that E-interpretations can be characterized by
fixing the domain as the Herbrand universe and requiring that for every ground term~$t$, $t \approx t \in I$, and for every ground atom $A$ (including
ground equations) the following is true: whenever $I \models A[s]$
and $I \models s\approx t$, then $I \models
A[t]$.  

Another characterization is to add to a given clause set
$M$ its equality axioms $\EAX(\Sigma_P \cup \Sigma_f)$, that is, the axioms expressing that
$\approx$ is a congruence relation on the terms and atoms 
induced by the predicate symbols $\Sigma_P$ and function symbols $\Sigma_f$
occurring in $M$. It is well-known 
that $M$ is E-satisfiable iff $M \cup \EAX(\Sigma_P \cup \Sigma_f)$ is
satisfiable.

We work mostly, but not always, with Herbrand interpretations.  If
not, we always make this clear, and the interpretations
considered then are first-order logic interpretations with domains
that are (proper) subsets of the Herbrand universe of the clause
set under consideration. Such interpretations are called
{\em quasi-Herbrand interpretations\/}.
When constructing such interpretations the requirement that function
symbols are interpreted as total functions over their domain is not
always trivially satisfied. For instance, in the presence of a
constant $a$, a unary function symbol $f$, and the domain
$\{ a, f(a) \}$, say, one has to assign a value in the interpretation
to every term.
However $f(f(a))$, for instance, cannot be assigned to itself, as
$f(f(a))$ is not contained in the domain.

\section{BUMG Methods}
\label{sec:BUMG}

Proof procedures based on model generation approaches establish the
satisfiability of a problem by trying to build a model for the
problem.
In this paper we are interested in bottom-up model generation approaches (BUMG).
BUMG approaches use a forward reasoning approach where implications
or clauses, $\calH \gets \calB$, are read as rules and are repeatedly
used to derive (instances of)~$\calH$ from (instances of)~$\calB$
until a completion is found.

The family of BUMG approaches includes many familiar calculi and proof
procedures such as
Smullyan type semantic tableaux~\citep{Smullyan71},
SATCHMO \citep{Manthey:Bry:SATCHMO:88,Geisler:etal:CompilingFunctionalSatchmo:JAR:97},
positive unit hyperresolution (PUHR) tableaux~\citep{BryYahya00,BryTorge98},
the model generation theorem prover MGTP~\citep{Fujita:Slaney:Bennett:FiniteAlgebraMGTP:IJCAI:95} and  
hypertableaux~\citep{Baumgartner:Furbach:Niemelae:HyperTableau:JELIA:96}.
A well-established and widely known method for BUMG is
hyperresolution~\citep{Robinson65b}.

Hyperresolution consists of two inference rules, hyperresolution
and factoring.
The \emph{hyperresolution rule} applies to a non-positive clause
$H \gets B_1 \land \ldots \land B_n$ ($n \not=0$) and~$n$~positive clauses
$C_1 \lor B'_1 \gets \top$, \ldots, $C_n \lor B'_n \gets \top$, and
derives 
$(C_1 \lor \ldots \lor C_n \lor H)\sigma \gets \top$,
where~$\sigma$ is the most general unifier such that $B'_{i}\sigma =
B_{i}\sigma$ for every $i\in \{1,\ldots,n\}$.
The \emph{factoring rule} derives the clause $(C \lor B)\sigma \gets \top$
from a positive clause $C \lor B \lor B' \gets \top$, where $\sigma$ is the
most general unifier of $B$ and $B'$.
On range-restricted clauses, when using hyperresolution, factoring
amounts to the elimination of duplicate literals in positive clauses
and is therefore optional when clauses are viewed as sets.

A crucial requirement for the effective use of blocking
(considered later in Section~\ref{sec:blocking})
is support of equality reasoning (for
example, ordered paramodulation, ordered rewriting or
superposition~\citep{BachmairGanzinger98c,Nieuwenhuis:Rubio:ParamodulationTheoremProving:HandbookAR:2001}),
in combination with simplification techniques based on orderings.
We refer to~\citet{BachmairGanzinger98c,BachmairGanzinger01a} for 
general notions of redundancy 
in saturation-based theorem proving approaches.

Our experiments show the splitting rule is useful for BUMG.
For our blocking transformations, splitting on the positive part of
(ground) clauses is in fact mandatory to make it effective. 
This type of splitting replaces the branch of a derivation
containing a positive clause $C \lor D \gets \top$, say, by two
copies of the branch in which the clause is replaced by $C \gets
\top$ and $D \gets \top$, respectively, provided that $C$ and $D$
do not share any variables.
Most BUMG procedures support this splitting technique,
in particular, the provers we have used do.

\section{Range-Restricting Transformations}
\label{sec:transformations}
\label{sec:range-restriction}

Existing transformations to range-restricted form
follow~\citet{Manthey:Bry:SATCHMO:88} (or are variations of it).
The transformation 
can be defined by a procedure carrying out the following
steps on a given set $M$ of clauses.
\begin{description}
\item[(0) Initialization.] Initially, let $\crr(M) := M$.
\item[(1) Add a constant.]
Let $\dom$ be a `fresh' unary predicate symbol not in $\Sigma_P$,
and let~$c$ be some constant.
Extend $\crr(M)$ by the clause
\[
\dom(c) \gets \top.
\]
The constant $c$ can be `fresh' or belong to $\Sigma_f$.
\item[(2) Range-restriction.] 
For each clause $\calH \gets \calB$ in $\crr(M)$, let $\{
x_1,\ldots,x_k\}$ be the set of
variables occurring in $\calH$ but not in $\calB$. Replace 
$\calH \gets \calB$ by the clause 
\begin{equation*}
\calH \gets \calB \land \dom(x_1) \land \cdots \land \dom(x_k).
\end{equation*}
We refer to this clause as the clause \emph{corresponding} to $\calH
\gets \calB$.
\item[(3) Enumerate the Herbrand universe.] For each $n$-ary
$f \in \Sigma_f$, add the clauses:
\begin{displaymath}
  \dom(f(x_1,\ldots,x_n)) \gets \dom(x_1) \land \cdots \land \dom(x_n).
\end{displaymath}
\end{description}
The computed set $\crr(M)$ is the {\em classical
range-restricting transformation} of~$M$.
It is not difficult to see that $\crr(M)$ is indeed range-restricted for
any clause set~$M$.
The transformation is sound and complete, that is,
$M$ is satisfiable iff $\crr(M)$ is
satisfiable~\citep{Manthey:Bry:SATCHMO:88,BryYahya00}.
The size of $\crr(M)$ is linear in the size of $M$ and can
be computed in linear time.

Perhaps the easiest way to understand the transformation is to
imagine we use a BUMG method, for example, hyperresolution.  The idea is to build the model(s) during
the derivation.
The clause added in Step~(1) ensures that the domain of interpretation
given by the domain predicate $\dom$ is non-empty.
Step~(2) turns clauses into range-restricted clauses.
This is done by shielding the variables $\{x_1, \ldots, x_k \}$
in the head, that do not occur negatively, with the added negative
domain literals.
Clauses that are already range-restricted are unaffected by this step.
Step~(3) ensures that all elements of the Herbrand universe of the
(original) clause set are added to the domain via hyperresolution
inference steps.

As a consequence a clause set $M$ with at least one non-nullary function
symbols causes hyperresolution derivations to be unbounded
for $\crr(M)$, unless $M$ is unsatisfiable.
This is a negative aspect of the classical range-restricting transformation.
However, the method has been shown to be useful for 
(domain-)minimal model generation
when combined with other techniques~\citep{BryYahya00,BryTorge98}.
In particular, \cite{BryTorge98} use splitting and the $\delta^*$-rule
to generate domain minimal models.
In the present research we have evaluated the combination of blocking
techniques (introduced later in Section~\ref{sec:blocking}) with the
classical range-restricting transformation $\crr$.
This has shown promising empirical results as presented in
Section~\ref{sec:experiments}.

Next, we introduce a new transformation to range-restricted form.
Instead of enumerating the generally infinite Herbrand universe in
a bottom-up fashion, the intuition is that it generates terms only as needed.

The transformation involves extracting
the non-variable top-level terms in an atom.
Let $P(t_1,\ldots,t_n)$ be an atom and suppose $x_1,\ldots,x_n$
are fresh variables.
For all $i \in \{1,\ldots,n\}$  let 
$s_i = t_i$, if $t_i$ is a variable, and $s_i = x_i$, otherwise.
The atom $P(s_1,\ldots,s_n)$ is called the {\em term abstraction of $P(t_1,\ldots,t_n)$\/}.
Let the {\em abstraction substitution $\alpha$\/} be defined by
\begin{displaymath}
  \alpha  = \{ x_i \mapsto t_i \mid \text{$1 \leq i \leq n$ and $t_i$ is not a  variable}\}.
\end{displaymath}
Hence, $P(s_1,\ldots,s_n)\alpha = P(t_1,\ldots,t_n)$, that is,
$\alpha$ reverts the term abstraction.

The new {\em range-restricting transformation}, denoted by $\rr$,
of a clause set $M$ is the clause set obtained by carrying out the
following steps (explanations and an example are given afterwards):
\begin{description}
  \item[(0) Initialization.] Initially, let $\rr(M) := M$.
  \item[(1) Add a constant.] Same as Step~(1) in the definition of $\crr$.

\item[(2) Domain elements from clause bodies.] 
For each clause $\calH \gets \calB$ in $M$ and each atom
$P(t_1,\ldots,t_n)$ from $\calB$, let $P(s_1,\ldots,s_n)$ be the term abstraction of
$P(t_1,\ldots,t_n)$ and let $\alpha$ be the corresponding abstraction
substitution.
Extend $\rr(M)$ by the set
\begin{displaymath}
    \{ \dom(x_i)\alpha \gets P(s_1,\ldots,s_n)  \mid 1 \leq i \leq n\text{ and } x_i \mapsto t_i \in \alpha  \}.
\end{displaymath}

\item[(3) Range-restriction.] Same as Step~(2) in the definition of $\crr$. 

\item[(4) Domain elements from $\Sigma_P$.] For each $n$-ary
  $P$ in $\Sigma_p$, extend $\rr(M)$ by the set
  \begin{displaymath}
    \{ \dom(x_i) \gets P(x_1,\ldots,x_n)  \mid i \leq i \leq n\}.
  \end{displaymath}

\item[(5) Domain elements from $\Sigma_f$.] For each $n$-ary
  $f$ in $\Sigma_f$, extend $\rr(M)$ by the set
 \begin{displaymath}
   \{ \dom(x_i) \gets \dom(f(x_1,\ldots,x_n)) \mid 1 \leq i \leq n\}.
 \end{displaymath}
  \end{description}

The intuition of the transformation reveals itself if we think of 
what happens when using hyperresolution.
The idea is again to build model(s) during the derivation, but this
time terms are added to the domain only \emph{as necessary}.
Steps~(1) and (3) are the same as Steps~(1) and~(2) in the definition of
$\crr$. 
The clauses added in Step~(2) cause functional terms that occur
negatively in the clauses to be inserted into the domain.
Step~(4) ensures that positively occurring functional terms are added to the
domain, and Step~(5) ensures that the domain is closed under subterms.

To illustrate the steps of the transformation consider
the following clause.
\begin{align}
\tag{$\dagger$}
\ID q(x,\ID g(x,y)) \lor \ID r(y,z) & \gets \ID p( \ID a, \ID f(x,y), x)
\end{align}
It is added to $\rr(M)$ in Step~(0).
Suppose the clause added in Step~(1) is
\[
\dom(a).
\]
For Step~(2) the term abstraction of the body literal of
clause~($\dagger$) is $\ID p( x_1,
x_2, x)$ and the abstraction substitution is
  $\alpha = \{ x_1 \mapsto \ID a, x_2 \mapsto \ID f(x,y) \}$.
The clauses added in Step~(2) are the following:
\begin{align}
\notag
\dom(\ID a) & \gets \ID p(x_1,  x_2, x) 
\\
\tag{$\ddagger$}
\dom(\ID f(x,y)) & \gets \ID p(x_1, x_2, x)
\end{align}
Notice that among the four clauses we have so far the clauses~($\dagger$) and~($\ddagger$)
are not range-restricted.
They are however replaced by range-restricted clauses in
Step~(3), namely:
\begin{align}
\tag{$\dagger\dagger$}
\ID q(x,\ID g(x,y)) \lor \ID r(y,z) & \gets \ID p( \ID a, \ID f(x,y), x) \land \dom(z)\\
\notag
  \dom(\ID f(x,y)) & \gets \ID p(x_1, x_2, x) \land \dom(y).
\end{align}
Step~(4) generates clauses responsible for inserting
the terms that occur in the heads of clauses into the domain.
That is, for each $i \in \{1,2,3\}$ and each $j \in \{1,2\}$ these
clauses are added.
\begin{align*}
\dom(x_i) & \gets  \ID p(x_1, x_2, x_3) \\ 
\dom(x_j) & \gets  \ID q(x_1, x_2) \\
\dom(x_j) & \gets  \ID r(x_1, x_2)
\end{align*}
For instance, when a model assigns true to the instance $\ID q(\ID
a,\ID g(\ID a, \ID f(\ID a, \ID a)))$ of one of the head atoms of the
clause~($\dagger\dagger$), then $\dom(\ID a)$ and $\dom(\ID g(\ID a, \ID f(\ID a,
\ID a)))$ are also true. 
It is not necessary to insert the terms of the instance of the other
head atom into the domain. The reason is that it does not matter how these
(extra) terms are evaluated, or whether the atom is evaluated to true
or false in order to satisfy the disjunction.

The clauses added in Step~(4) alone are not sufficient, however.
For each term in the domain all its subterms have to be in the
domain, too. This is achieved with the clauses obtained in Step (5).
That is, for each $j \in \{1,2\}$ these clauses are added.
\begin{align*}
\dom(x_j) & \gets \dom(\ID f(x_1, x_2)) \\
\dom(x_j) & \gets \dom(\ID g(x_1, x_2))
\end{align*}

For the purposes of model generation, it is important to note that
one particular type of clause in the $\rr$ transformation
should not be treated as a normal clause.
For the equality predicate, Step~(4) produces the clauses
\begin{align*}
\tag{$\#$}
\dom(x) & \gets x \approx y
& \dom(y) & \gets x \approx y.
\end{align*}
Most theorem provers simplify these clauses to $\dom(x)$.
As a consequence this can lead to all negative
domain literals being resolved away and all clauses containing a
positive domain literal to be subsumed.
This means range-restriction is undone.
This is what happens in SPASS.

Since Step~(4) clauses really only need to be added for positively
occurring predicate symbols an easy solution 
involves replacing any positive occurrence of the equality predicate
by a predicate symbol $\myequal$ (say), which is fresh in the signature,
and adding the clauses
\begin{align*}
\dom(x) & \gets \myequal(x,y) 
& \dom(y) & \gets \myequal(x,y) 
\end{align*}
in Step (4) rather than~($\#$).
In addition, the clause set needs to be extended by this definition of
$\myequal$.
\begin{align*}
x \approx y & \gets \myequal(x,y)
\end{align*}
This solution has the intended effect of adding terms occurring in
positive equality literals to the domain, and prevents other inferences
or reductions on $\myequal$. 
It is not difficult to prove that E-satisfiability is preserved in both
directions. We will implicitly use this fact in the proofs below.

\begin{proposition}[Completeness of range-restriction]%
 \label{prop:rr-completeness} Let $M$ be any clause set.  
 If~$\rr(M)$ is satisfiable then $M$ is satisfiable.
  \end{proposition}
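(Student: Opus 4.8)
The plan is to read a model of $M$ off the extension of the domain predicate in a model of $\rr(M)$. So assume $I \models \rr(M)$ and let $D = \{t : I \models \dom(t)\}$ be the set of ground terms that $I$ places in the domain. First I would record the three structural properties of $D$ that the transformation guarantees. It is non-empty, since the Step~(1) clause forces $c \in D$. It is closed under subterms, since the Step~(5) clauses $\dom(x_i) \gets \dom(f(x_1,\ldots,x_n))$ hold in $I$. And every ground atom true in $I$ has all of its top-level terms in $D$, since the Step~(4) clauses $\dom(x_i) \gets P(x_1,\ldots,x_n)$ hold. (For plain satisfiability I would treat $\approx$ as an ordinary predicate symbol; the $\myequal$ device discussed above only matters when congruence, i.e.\ E-satisfiability, is at stake.)

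Next I would build a quasi-Herbrand interpretation $I'$ with domain $D$: interpret each predicate by restricting $I$ to those atoms whose arguments all lie in $D$, and interpret each function symbol $f$ by $f^{I'}(t_1,\ldots,t_n) = f(t_1,\ldots,t_n)$ when this term lies in $D$ and by some fixed $d_0 \in D$ otherwise. The reassignment in the second case is unavoidable because $D$ need not be closed under application. The role of subterm closure is to license an \emph{evaluation lemma}, proved by a routine induction on term structure: for any valuation $\mu$ into $D$, with associated ground substitution $\sigma$ (legitimate since $D$ is a set of ground terms), and any term $t$ with $t\sigma \in D$, one has $(I',\mu)(t) = t\sigma$. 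In words, no reassignment happens along terms whose $\sigma$-instance stays inside $D$.

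To show $I' \models M$ I would then fix a clause $\calH \gets \calB$ of $M$ and a valuation $\mu$ into $D$, and assume every body atom of $\calB$ is true under $(I',\mu)$. The crucial step is to prove that each top-level term $t$ of each body atom satisfies $t\sigma \in D$. For variable top-level terms this is immediate; for non-variable ones it is exactly what the Step~(2) clauses provide, by matching the true atom witnessing $(I',\mu) \models \calB$ against the abstracted body of the corresponding Step~(2) clause, its range-restriction guards being met because $\mu$ ranges over $D$, to obtain $\dom(t\sigma) \in I$. With all body term-instances in $D$, the evaluation lemma turns body truth in $I'$ into $I \models \calB\sigma$. Since the head-only variables are mapped by $\mu$ into $D$, the domain guards added in Step~(3) to the clause corresponding to $\calH \gets \calB$ are satisfied, so $I$ must satisfy some head-atom instance $H\sigma$. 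Finally Step~(4) forces the arguments of $H\sigma$ into $D$, so the evaluation lemma applies once more and $(I',\mu)$ satisfies that head atom, completing the clause.

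The main obstacle, and the point where this differs essentially from the completeness proof for the classical transformation $\crr$, is that $D$ is not closed under function application: $\rr$ deliberately drops the Herbrand-enumeration clauses $\dom(f(\vec x)) \gets \dom(x_1) \land \cdots \land \dom(x_n)$ and instead creates domain terms only as needed. For $\crr$ those clauses make $\dom^I$ a genuine sub-universe, so one simply restricts $I$ to it. Here I must instead control where term evaluation would leave $D$, and the key realization is that the Step~(2) clauses are precisely calibrated so that a satisfied body never forces an out-of-domain term. Thus the reassignment baked into $I'$ is never actually exercised on the atoms that matter, and the argument goes through.
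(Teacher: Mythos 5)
Your proposal is correct and follows essentially the same route as the paper's proof: same domain $D$ read off the $\dom$ predicate, same use of subterm closure (Step~(5)) to get self-evaluation of domain terms, same use of the Step~(2) clauses to push body top-level terms into $D$, and the same roles for the Step~(3) guards and Step~(4) clauses; the only differences are cosmetic (direct argument instead of contradiction, and an explicitly isolated evaluation lemma where the paper makes the same observation inline). If anything, you are slightly more careful than the paper in noting that the Step~(2) clauses are themselves range-restricted and that their $\dom$-guards are met because $\mu$ ranges over $D$.
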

\begin{proof} 
Suppose $\rr(M)$ is satisfiable. Let $I_\rr$ be a Herbrand model of
$\rr(M)$. We define a quasi-Herbrand interpretation $I$
and show that it is a model of $M$. 

First, the domain of $I$ is defined as the set
$|I| = \{ t \mid I_\rr \models \dom(t) \}$.

Now, to define a total interpretation for the
function symbols, we map each $n$-ary function symbol $f$ in 
$\Sigma_f$ to the function $f^I: |I| \times \cdots \times |I| \mapsto |I|$, where, for all
$d_1,\ldots,d_n\in |I|$, 
\begin{displaymath}
  f^I(d_1,\ldots,d_n) := 
    \begin{cases}
      f(d_1,\ldots,d_n) & \text{if $f(d_1,\ldots,d_n) \in |I|$, and} \\
      c & \text{otherwise.}
    \end{cases}
\end{displaymath}
Here, the constant $c$ is the one mentioned in Step (1) of
the transformation. (It is clear that $|I|$ contains $c$.)

Notice that due to Step (5) the domain $|I|$ must contain for each
term all its subterms. An easy consequence is that all terms in $|I|$
are evaluated as themselves, exactly as in Herbrand interpretations.
Each other (ground) term is evaluated as some other term from 
$|I|$.
For instance, if $|I| = \{ c, f(c)\}$ then
  $I(f(g(c))) = f^I(I(g(c))) = f^I(g^I(c)) = f^I(c) = f(c)$, since $g(c)
\not\in |I|$ and by the definition of $g^I$.
We see that $f$
is indeed mapped to a total function over the domain $|I|$, as
required.

Regarding the interpretation of the predicate symbols in $I$,  define for
every $n$-ary predicate symbol $P$ in $\Sigma_P$ and for all
$d_1,\ldots,d_n\in |I|$:
\begin{equation}
  \label{eq:def-P-J}
  P(d_1,\ldots,d_n) \in I\text{ iff } P(d_1,\ldots,d_n) \in {I_\rr}\enspace.
\end{equation}
That is, the interpretation of the predicate symbols in $I$ is the
same as in $I_\rr$ under the restriction of the domain to $|I| \subseteq |I_\rr|$.

It remains to show that $I$ is a model of $M$. It suffices to pick a
clause $\calH \gets \calB$ from~$M$ arbitrarily and show that $I$
satisfies this clause. We do this by assuming that $I$ does not
satisfy $\calH \gets \calB$ and deriving a contradiction. 

That $I$ does not satisfy $\calH \gets \calB$ means there is a
valuation $\mu$ such that $(I,\mu)
\models \calB$ but $(I,\mu) \not\models \calH$.
As usual, a valuation is a (total) mapping from the variables to the
domain under consideration.

Because the domain $|I|$ consists of (ground) terms, the valuation
$\mu$ can be seen as a substitution. Thus, $\calB\mu$ is a set of ground
atoms, and $\calB\mu \subseteq I$ may or may not hold.
We show
next that if $(I,\mu) \models \calB$, as given, then $\calB\mu \subseteq I$. In other
words, the body is
satisfied in~$I$ because~$|I|$ contains all body atoms $\calB\mu$,
but not for the reason that~$I$ assigns true to some body atom $B$ with some
argument term evaluated to $c$, and that atom being contained in 
$I$.
An example for the latter case is $|I| = \{\ID c\}$, $\calB =
  \ID P(x)$, $I = \{ \ID P(\ID c) \}$ and $\mu = \{ x \mapsto \ID a \}$. Although we
  have $(I,\mu) \models P(x)$, in essence because $\ID{a}^I = \ID c$, it does {\em not\/}
  hold that $\ID P(\ID a) \in I$.
The relevance of this result is that it allows syntactically based
reasoning further below to show that $I$ is a model of $M$.

To show $\calB\mu \subseteq I$ it suffices to choose any body literal
$P(t_1,\ldots,t_n)$ from $\calB$ arbitrarily and show $P(t_1,\ldots,t_n)\mu \in
I$.  
Now from $(I,\mu) \models \calB$ it follows that $(I,\mu) \models P(t_1,\ldots,t_n)$.
Reading~$\mu$ as a ground substitution this
means $P(I(t_1\mu),\ldots,I(t_n\mu)) \in I$. 
Using the equivalence~\eqref{eq:def-P-J} it follows that\linebreak
$P(I(t_1\mu),\ldots,I(t_n\mu)) \in I_\rr$. To show $P(t_1,\ldots,t_n)\mu \in I$, as
required, it thus suffices to show $I(t_i\mu) = t_i\mu$, because
$P(t_1,\ldots,t_n)\mu \in I$ follows from\linebreak $P(I(t_1\mu),\ldots,I(t_n\mu)) \in
I_\rr$ and equivalence~\eqref{eq:def-P-J}.

Thus, let us show $I(t_i\mu) = t_i\mu$.
By the definition of the interpretation function~$\cdot^I$ 
it is enough to show $t_i\mu \in |I|$ (as said above, terms from $|I|$
are evaluated to themselves).  
If $t_i$ is a variable then $t_i\mu \in |I|$ follows from the
fact that $\mu$ was chosen as a substitution into~$|I|$.  Assume
now that $t_i$ is not a variable and let $P(s_1,\ldots,s_n)$ be the term
abstraction of $P(t_1,\ldots,t_n)$ and~$\alpha$ its abstraction substitution.
By transformation Step~(2), $\rr(M)$ includes the clause
\begin{equation}
  \dom(x_i)\alpha \gets P(s_1,\ldots,s_n),
\label{eq:dom-p}
\end{equation}
where $\{ x_i \mapsto t_i \} \in \alpha$.
By the definition of an abstraction, for all $j \in \{ 1,\ldots,n\}$, $s_j$ is a fresh
variable whenever $t_j$ is not a variable. 

Recall from above that $P(I(t_1\mu),\ldots,I(t_n\mu)) \in I_\rr$. We are going to
show now that with clause~\eqref{eq:dom-p}  this entails $\dom(t_i\mu)$.
By the construction of $|I|$ this suffices to prove $t_i\mu \in |I|$, as
desired.

Consider the substitution 
\begin{displaymath}
  \mu' = \mu \{ x_j \mapsto I(t_j\mu) \mid x_j \mapsto t_j \in \alpha
\}.
\end{displaymath}
It agrees with $\mu$ (in particular) when $t_j$ is a variable  and
otherwise maps the variable $x_j$ to $I(t_j\mu)$.

When $t_j$
is a variable then let $s_j = t_j$ be the definition of an abstraction. This
means $s_j\mu' = s_j\mu = t_j\mu = I(t_j\mu)$ (the latter identity holds, again,
because $\mu$ is a substitution into $|I|$ and elements from $|I|$
evaluate to themselves). When $t_j$ is not a variable then $s_j$
is the variable~$x_j$. By construction of $\mu'$ we have $s_j\mu' =
x_j\mu' = I(t_j\mu)$. Hence, in both cases $s_j\mu' = I(t_j\mu)$.

Applying the substitution $\mu'$ to the clause~\eqref{eq:dom-p} yields 
\begin{displaymath}
  \dom(x_i)\alpha\mu' \gets P(s_1,\ldots,s_n)\mu'\enspace.
\end{displaymath}
With the identities $s_j\mu' = I(t_j\mu)$, the identities
$\dom(x_i)\alpha\mu' =\dom(t_i)\mu'$ and the fact that
$P(I(t_1\mu),\ldots,I(t_n\mu))
\in I_\rr$ it follows that $\dom(t_i)\mu' \in I_\rr$. The substitution $\mu$ and~$\mu'$
differ in their domains only on the fresh variables $x_1,\ldots,x_n$. 
Therefore $\dom(t_i)\mu' = \dom(t_i)\mu$ and $\dom(t_i)\mu \in I_\rr$
follows, as desired.

This was the last subgoal to be proven to establish
$P(t_1,\ldots,t_n)\mu \in I$, which, in turn,  remained to be
shown to complete the proof that $\calB\mu \subseteq I$. 

The next step in the proof is to show that the clause body
of the clause in $\rr(M)$ corresponding to $\calH \gets \calB$ is
satisfied by $I_\rr$. That clause is the range-restricted version of 
the clause $\calH \gets \calB$ in $M$. According to 
Step (3) of the transformation it has the form 
\begin{equation}
\calH \gets \calB \land \dom(x_1) \land \cdots \land \dom(x_k)
\label{eq:H-B-rr}
\end{equation}
for some variables $x_1,\ldots,x_k$; those occurring in $\calH$ but not
in $\calB$. 

From $\calB\mu \subseteq I$ as derived above and 
equivalence~\eqref{eq:def-P-J} it follows that $\calB\mu \subseteq I_\rr$.
Recall that~$\mu$ is a valuation mapping into the domain $|I|$. Reading it as a
substitution gives $x_j\mu \in |I|$, for all $j \in \{ 1,\ldots,k\}$.  From the
construction of $|I|$ it follows that $\dom(x_j\mu) \in I_\rr$.  Together
with $\calB\mu \subseteq I_\rr$ and the fact that $I_\rr$ is a model of
$\rr(M)$, and hence of clause~\eqref{eq:H-B-rr}, it follows
that $I_\rr$ satisfies $\calH\mu$. This means $A\mu \in I_\rr$ for some
head atom $A$ in $\calH$. 

The atom $A$ is of the form $Q(s_1,\ldots,s_m)$
for some $m$-ary predicate symbol $Q$ and terms $s_1,\ldots,s_m$.
By Step (4) of the transformation, $\rr(M)$ includes,
for all $i \in \{ 1,\ldots,m\}$ the clause
\begin{equation}
    \dom(x_i) \gets Q(x_1,\ldots,x_m).
\label{eq:dom-clause-rr}
\end{equation}
Again by reading $\mu$ as a substitution, because $I_\rr$ is a model of
$\rr(M)$, and hence of clause~\eqref{eq:dom-clause-rr}, and by the
identities $Q(s_1\mu,\ldots,s_m\mu) = Q(s_1,\ldots,s_m)\mu = A\mu \in I_\rr$
we conclude $\dom(s_i\mu) \in I_\rr$, for all $i \in \{ 1,\ldots,m\}$.  By
construction of $|I|$ we have that $s_i\mu \in |I|$. By
equivalence~\eqref{eq:def-P-J} it follows that $Q(s_1\mu,\ldots,s_m\mu) \in I$.

Recall that $Q(s_1,\ldots,s_m)$ is a head atom of the clause
\eqref{eq:H-B-rr} and hence a head atom of the clause $\calH \gets
\calB$.  Further recall that $s_i\mu \in |I|$ entails that $s_i\mu$ is
evaluated to itself in $I$. Together with $Q(s_1\mu,\ldots,s_m\mu) \in I$
this means $(I,\mu) \models Q(s_1,\ldots,s_m)$. This is a 
contradiction to $(I,\mu) \not\models \calH$ as concluded above.
The proof is complete. \qed
  \end{proof}

The proof actually gives a characterization of the models associated
with a satisfiable clause set $\rr(M)$.

\begin{corollary}[Completeness of range-restriction wrt.\
  E-interpretations]
 \label{cor:rr-E-completeness} Let~$M$ be any clause set.  
 If $\rr(M)$ is E-satisfiable then $M$ is E-satisfiable.
  \end{corollary}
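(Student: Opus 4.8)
The plan is to reuse the construction from the proof of Proposition~\ref{prop:rr-completeness} almost verbatim, adding only the one ingredient the E-setting demands: that the interpretation built there respects equality. Concretely, I would start from the assumption that $\rr(M)$ is E-satisfiable, pick a Herbrand interpretation $J$ with $J \models_\E \rr(M)$, and set $I_\rr := J^\E$. Then $I_\rr$ is a Herbrand model of $\rr(M)$ (since $J \models_\E \rr(M)$ means $J^\E \models \rr(M)$) which is moreover a congruence on the Herbrand universe. Running the Proposition's construction on this particular $I_\rr$ produces the domain $|I| = \{t \mid I_\rr \models \dom(t)\}$, the fall-back function interpretation $f^I$, and the predicate interpretation fixed by~\eqref{eq:def-P-J}. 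Because the proof of Proposition~\ref{prop:rr-completeness} uses only that $I_\rr \models \rr(M)$ and never invokes congruence, it applies unchanged and yields $I \models M$.

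It then remains to upgrade $I \models M$ to $I \models_\E M$, for which it suffices to show that the relation $\approx^I = \{(s,t) \in |I|^2 \mid s\approx t \in I_\rr\}$ is a congruence on $|I|$ that is compatible with $f^I$ and with every predicate interpreted in $I$; the ordinary model $I$ is then an E-model of $M$ (equivalently, one may quotient $|I|$ by $\approx^I$ to obtain a genuine equality model, which preserves satisfaction of $M$). Reflexivity, symmetry, transitivity, and compatibility of $\approx^I$ with the predicate symbols are inherited directly from $I_\rr$ being a congruence, restricted to the subdomain $|I|$ and read off through~\eqref{eq:def-P-J}, which applies to $\approx \in \Sigma_P$ exactly as to any other predicate.

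The only genuinely new point, and the step I expect to be the main obstacle, is compatibility of $\approx^I$ with $f^I$, precisely because $f^I$ falls back to the constant $c$ on arguments whose image term lies outside $|I|$. The decisive observation is that, since $I_\rr$ is a congruence, the domain $|I|$ is closed under $\approx^I$: if $\dom(u)\in I_\rr$ and $u \approx u' \in I_\rr$ then $\dom(u') \in I_\rr$. Hence for $\bar d, \bar d' \in |I|^n$ with $d_i \approx^I d_i'$ for all $i$, the terms $u = f(\bar d)$ and $u' = f(\bar d')$ satisfy $u \approx u' \in I_\rr$, and this closure forces $u$ and $u'$ to be simultaneously inside or outside $|I|$. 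In the first case $f^I(\bar d) = u \approx u' = f^I(\bar d')$ holds in $I_\rr$ with both terms in $|I|$, so $f^I(\bar d) \approx^I f^I(\bar d')$; in the second case $f^I(\bar d) = c = f^I(\bar d')$ and reflexivity closes the goal. Thus the $c$ fall-back never breaks the congruence.

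Finally, I would record that the positive-equality preprocessing via $\myequal$ (the clauses~$(\#)$ and their replacements) causes no trouble: as already noted in the text, that replacement preserves E-satisfiability in both directions, so taking $I_\rr$ to be an E-model of the preprocessed $\rr(M)$ loses nothing. Having shown that $\approx^I$ is a congruence compatible with all function and predicate symbols and that $I \models M$, I conclude $I \models_\E M$, i.e. $M$ is E-satisfiable, which completes the proof.
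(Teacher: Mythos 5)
Your proof is correct, but it takes a genuinely different route from the paper's. The paper argues by contraposition and stays entirely at the level of clause sets: it assumes $M$ is E-unsatisfiable, trades E-unsatisfiability for ordinary unsatisfiability of $M \cup \EAX(\Sigma_P \cup \Sigma_f)$, applies Proposition~\ref{prop:rr-completeness} to that enlarged set, and then observes that $\rr(M \cup \EAX(\Sigma_P \cup \Sigma_f))$ differs from $\rr(M) \cup \EAX(\Sigma_P \cup \Sigma_f)$ only in that reflexivity $x\approx x$ becomes $x\approx x \gets \dom(x)$, a clause subsumed by the axiom it replaces; the equality axioms can then be stripped off again, yielding E-unsatisfiability of $\rr(M)$. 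You instead reuse the \emph{model construction} from the proof of Proposition~\ref{prop:rr-completeness} rather than its statement, and verify directly that the resulting quasi-Herbrand interpretation respects the congruence. The one step that is genuinely new on your route --- that the fall-back of $f^I$ to the constant $c$ cannot break functional compatibility --- you handle correctly via the observation that $|I|$ is closed under $\approx$ because $I_\rr$, being a congruence also with respect to $\dom$, propagates $\dom(u)$ along $u \approx u'$; together with the quotient step this is a complete argument. What the paper's version buys is brevity and black-box reuse of the Proposition; what yours buys is an explicit E-model of $M$ (very much in the spirit of the remark following Proposition~\ref{prop:rr-completeness} that the proof characterizes the models obtained), at the cost of re-opening the construction and checking the congruence conditions by hand.
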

  \begin{proof}
We prove the contra-positive statement. Thus assume $M$ is
E-unsatisfiable. Equivalently,  $M \cup \EAX(\Sigma_P \cup \Sigma_f)$ is unsatisfiable.
By Proposition~\ref{prop:rr-completeness},
$\rr(M \cup \EAX(\Sigma_P \cup \Sigma_f))$ is unsatisfiable.
Observe Steps~(2) and~(3), which are the only ones that
apply directly to the given clauses, have no effect on the equality
axioms $\EAX(\Sigma_P \cup \Sigma_f)$, except for the reflexivity axiom $x\approx x$,
which is replaced by $x\approx x \gets \dom(x)$.
The transformed set $\rr(M \cup \EAX(\Sigma_P \cup \Sigma_f))$ coincides
with
\begin{displaymath}
\rr(M) \cup (\EAX(\Sigma_P \cup \Sigma_f) \setminus \{ x\approx x \}) \cup \{ x\approx x \gets \dom(x) \}.
\end{displaymath}
Adding back the reflexivity axiom trivially preserves
unsatisfiability, that is, with $\rr(M \cup \EAX(\Sigma_P \cup \Sigma_f))$ being
unsatisfiable, so is
\begin{displaymath}
\rr(M) \cup \EAX(\Sigma_P \cup \Sigma_f) \cup \{ x\approx x \gets \dom(x) \}.
\end{displaymath}
The clause $x\approx x \gets \dom(x)$ can be deleted because it is subsumed by the clause $x\approx x
\in \EAX(\Sigma_P \cup \Sigma_f)$. Hence,
\begin{displaymath}
\rr(M) \cup \EAX(\Sigma_P \cup \Sigma_f)
\end{displaymath}
is unsatisfiable, and so $\rr(M)$ is E-unsatisfiable. \qed
\end{proof}
We emphasize that we do not propose to actually use the equality
axioms in conjunction with a theorem prover (though they can of course). They serve merely as a
theoretical tool to prove completeness of the transformation.

\begin{proposition}
Let~$M$ be any clause set. Then
\begin{enumerate}[(i)]
\item
The size of $\rr(M)$ is bounded by a linear function
in the size of $M$.
\item
$\rr(M)$ can be computed in quadratic time.
\item
$\rr(M)$ is range-restricted.
\end{enumerate}
\label{result_rr_Horn_preservation}
\end{proposition}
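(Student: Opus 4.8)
The plan is to establish all three items by a single case analysis that walks through Steps~(0)--(5) of the $\rr$ transformation and accounts for the clauses each step produces. I would prove~(iii) first, as it is the cleanest and it makes the role of the individual steps explicit, and then read off~(i) and~(ii) from the same analysis. For~(iii) the crucial point is that Step~(3), being the analogue of Step~(2) of $\crr$, is applied to the \emph{entire} set accumulated after Steps~(0)--(2): the original clauses of $M$, the fact $\dom(c)$, and the Step~(2) clauses $\dom(t_i) \gets P(s_1,\ldots,s_n)$. By definition of Step~(3) each such clause is replaced by $\calH \gets \calB \land \dom(x_1) \land \cdots \land \dom(x_k)$, where $x_1,\ldots,x_k$ are exactly the head variables not occurring in the body, so afterwards every head variable occurs in the body and the clause is range-restricted; the fact $\dom(c)$ has no head variables and is left untouched. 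It then remains to inspect the clauses added afterwards by Steps~(4) and~(5): each has the form $\dom(x_i) \gets P(x_1,\ldots,x_n)$ or $\dom(x_i) \gets \dom(f(x_1,\ldots,x_n))$ with $1 \leq i \leq n$, so its only head variable $x_i$ already appears in the body. Hence every clause of $\rr(M)$ is range-restricted.

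For~(i) I would bound the size contributed by each step, treating the maximal arity $a$ of the (finite) signature as a constant. Steps~(0) and~(1) contribute $|M| + O(1)$. Step~(2) contributes, for each body atom $P(t_1,\ldots,t_n)$, at most $n \leq a$ clauses whose bodies are copies of the abstraction of size $O(a)$ and whose heads $\dom(t_i)$ together have size $O(\sum_i |t_i|)$, that is, linear in the size of the atom; summing over all body atoms gives $O(|M|)$. Step~(3) adds to each clause at most one $\dom$-literal per head variable, i.e.\ $O(\text{head size})$ symbols per clause and $O(|M|)$ overall. Steps~(4) and~(5) add $O(a)$ symbols for each predicate and each function symbol, of which there are at most $|M|$, again $O(|M|)$. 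Summing, $\rr(M)$ has size $O(|M|)$.

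For~(ii), since the output already has linear size, the quadratic bound must come from the cost of computing it rather than from emitting it. The bottleneck is Step~(3): for each clause one must decide which head variables fail to occur in the body, and the direct method compares each head variable against the body, costing $O(\text{head size} \times \text{body size})$ per clause and hence $O(|M|^2)$ in total. The remaining steps (forming term abstractions, collecting the signature symbols) are computable by a linear pass, so the overall time is $O(|M|^2)$.

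The main obstacle is the size accounting in~(i). A single body atom, or predicate or function symbol, of large arity $n$ spawns about $n$ clauses each carrying an $O(n)$-sized body, so its contribution is $O(n^2)$ and the naive global bound is quadratic. Genuine linearity therefore hinges on treating the maximal arity as a constant (note that merely summing arities does not suffice, since $\sum n^2$ may exceed $\sum n$); making this dependence explicit is the one delicate point, while everything else is routine bookkeeping over the six steps.
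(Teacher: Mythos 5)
The paper states this proposition without proof, so there is no official argument to compare yours against; the walk-through of Steps~(0)--(5) that you give is the routine verification the authors evidently had in mind, and it is correct. For~(iii) you isolate the one point that actually matters: Step~(3) is applied to the whole set accumulated so far, in particular to the not-necessarily-range-restricted clauses $\dom(t_i) \gets P(s_1,\ldots,s_n)$ produced by Step~(2) (the paper's own example with the clause~($\ddagger$) illustrates exactly this), while the Step~(4) and~(5) clauses and the fact $\dom(c)$ are range-restricted by inspection. Your caveat on~(i) is also well taken and worth making explicit: a single $n$-ary body atom, and likewise each $n$-ary symbol processed in Steps~(4) and~(5), spawns up to $n$ clauses each of size proportional to $n$, so the bound is linear only when the maximal arity is treated as a constant of the signature; the paper tacitly concedes this point in the experimental evaluation, where for problem SYO600-1 Step~(4) alone adds $64$ clauses for each of $284$ predicates of arity $64$. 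For~(ii) your naive per-clause scan for head variables missing from the body yields the stated quadratic bound, which is consistent with the authors' remark that only a carefully modified definition of $\rr$ achieves linear time.
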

By carefully modifying the definition of $\rr$ and at the expense
of some duplication it is possible to compute the reduction in linear
time.

Proposition~\ref{result_rr_Horn_preservation}~(iii) confirms that every
clauses produced by the $\rr$ transformation is range-restricted.

Let us consider another example to get a better understanding of the
$\rr$ transformation.
\begin{align}
\tag{$*$}
\ID r(x) & \gets \ID q(x) \land \ID p(\ID f(x)).
\end{align}
Applying Steps~(2) and~(3) of the $\rr$ transformation gives us the clause
\begin{align*}
\dom(\ID f(x)) & \gets \dom(x) \land \ID p(y).
\end{align*}
This clause is splittable into 
\begin{align*}
\dom(\ID f(x)) \gets \dom(x)
\quad \text{and} \quad \bot \gets \ID p(y).
\end{align*}
The first split component clause is an
example of an `enumerate the Herbrand universe' clause from the
$\crr$ transformation (Step~(2) in the definition of $\crr$).
Such clauses are unpleasant because they cause the entire Herbrand
universe to be enumerated with BUMG approaches.

Before describing a solution let us analyze the problem further.
The main rationale of our $\rr$ transformation is to constrain the
generation of domain elements and limit the number of inference steps.
The general form of clauses produced by Step~(2), followed by Step~(3),
is the following, where $\overline y \subseteq \overline x$, $\overline
x \subseteq \overline y \cup \overline z$ and
$\overline u \subseteq \overline z$.
\begin{align*}
\dom(f(\overline x)) & \gets \dom(y_1) \land \ldots \land \dom(y_n)
\land P(\overline z) \\
\dom(f(\overline u)) & \gets P(\overline z)
\end{align*}
Clauses of the first form are often splittable (as in the example
above), and can produce clauses of the unwanted form
\[
\dom(f(\overline y)) \gets \dom(y_1) \land \ldots \land \dom(y_n).
\]
Suppose therefore that splitting of any clause is forbidden when this
splits the negative part of the clause (neither SPASS nor a hypertableaux prover do this anyway).
Although the two types of clauses above both \emph{do} reduce the
number of terms created, compared to the classical range-restricting
transformation, the constraining effect of the first type of clauses is
slightly limited.
Terms $f(\overline s)$ are not generated, only when no fact $P(\overline
t)$ is present or has been derived.
When a clause~$P(\overline t)$ is present, or as soon as such a clause is
derived (for \emph{any} ground terms $\overline t$), then terms are freely
generated from terms already in the domain with~$f$ as the top symbol.

Here is an example of a clause set for which the derivation is infinite
on the $\rr$~transformation.
(The example is an extension of the example above with the clause
$\ID p(\ID b) \gets \top$.)
\begin{align*}
\ID p(\ID b) & \gets \top &
\ID r(x) & \gets \ID q(x) \land \ID p(\ID f(x)) &&
\end{align*}
Notice the derivation is infinite on the classical
range-restricting transformation as well, due to the generated clauses 
$\ID{dom}(\ID b) \gets \top$ and $\ID{dom}(f(x)) \gets \ID{dom}(x)$.

The second type of clauses, $\dom(f(\overline u)) \gets P(\overline
z)$, are less problematic.
Here is a concrete example.
For $\bot \gets \ID r(x,\ID f(x))$,
Step (2) produces the clause
\begin{align*}
\dom(\ID f(x)) \gets \ID r(x,y).
\end{align*}
Although this clause, and the general form, still causes larger terms
to be built with hyperresolution type inferences, the constraining
effect is larger.

In the next two sections we discuss ways of improving range-restricting
transformations further.

\section{Shifting Transformation}
\label{sec:shifting}

The clauses introduced in Step (2) of the new $\rr$~transformation to
range-restricted form use abstraction and insert (possibly a large number of) instantiations of terms occurring in
the clause bodies into the domain.
These are sometimes unnecessary and can lead to non-termination of BUMG
procedures.

The {\em shifting\/} transformation introduced next can address this problem.
It consists of two sub-transformations, {\em basic shifting\/} and
{\em partial flattening\/}.

If $A$ is an atom $P(t_1,\ldots,t_n)$ then let $\NOT{A}$
denote the atom $\NOT{P}(t_1,\ldots,t_n)$, where $\NOT{P}$ is a fresh predicate
symbol which is uniquely associated with the predicate symbol $P$.
If $P$ is the equality symbol $\approx$ we write $\NOT{P}$ as
$\not\approx$ and use infix notation.

Now, the {\em basic shifting transformation} of a clause set $M$
is the clause set $\bs(M)$ 
obtained from~$M$ by carrying out the following steps.
  \begin{description}
  \item[(0) Initialization.] Initially, let $\bs(M) := M$.

  \item[(1) Shifting deep atoms.] 
Replace each clause in $\bs(M)$ of the form $\calH \gets B_1 \land \cdots \land B_m
\land \calB$, where each atom $B_1,\ldots,B_m$ contains at least one proper
functional term and $\calB$ contains no proper functional term, by the
clause 
\begin{displaymath}
  \calH \lor \NOT{B_1} \lor \cdots \lor \NOT{B_m} \gets \calB.
\end{displaymath}
Each of the atoms $B_1,\ldots,B_m$ is called a {\em shifted atom\/}.

\item[(2) Shifted atom consistency.]  Extend $\bs(M)$ by the clause set 
  \begin{multline*}
    \{ \bot \gets P(x_1,\ldots,x_n) \land \NOT{P}(x_1,\ldots,x_n) \mid \\
\text{$P$ is the $n$-ary predicate symbol of a shifted atom}\}.
  \end{multline*}
\end{description}

Notice that we do {\em not\/} add  clauses complementary to the `shifted
atoms consistency' clauses, that is, $P(x_1,\ldots,x_n) \lor
\NOT{P}(x_1,\ldots,x_n) \gets \top$.
They could be included but are superfluous.

Let us continue the example given at
the end of the previous section.
We can use basic shifting to move negative
occurrences of functional terms into heads.
In the example, clause~$(*)$ is replaced by
\begin{align*}
\ID r(x) \lor \NOT{\ID p}(\ID f(x)) & \gets \ID q(x) \tag{$**$} \\
\bot & \gets \NOT{\ID p}(x) \land \ID p(x) \notag \\
\dom(x) & \gets \ID r(x) \notag &
\dom(x) & \gets \NOT{\ID p}(x) \notag &
\end{align*}
Even in the presence of an additional clause, say, 
$\ID q(x) \gets \top$, which leads to the clauses
\begin{align*}
\dom(\ID a) & \gets \top & \ID q(x) \gets \dom(x), &&
\end{align*}
termination of BUMG can be achieved.

For instance, in a hyperresolution-like mode of operation and with splitting
enabled, the SPASS prover~\citep{WeidenbachSchmidtEtAl07,WeidenbachSPASS35} splits
the derived clause $\ID r(\ID a) \lor \NOT{\ID p}(\ID f(\ID a))$,
considers the case with the smaller literal  $\ID r(\ID a)$ first
{\em and terminates with a model\/}. This is because a finite completion (model) is
found without considering the case of the bigger literal $\NOT{\ID
  p}(\ID f(\ID a))$, 
which would have added the deeper term~$\ID f(\ID a)$ to the domain.
The same behaviour can be achieved, for example, with
the KRHyper BUMG prover, a hypertableaux theorem prover~\citep{Wernhard:KRHyper:CADE-19-WS:2003}.

As can be seen in the example, the basic shifting transformation
trades the generation of new domain elements for a smaller clause
body by removing literals from it.
Of course, a smaller clause body affects the search space,
as then the clause can be used as a premise more often.
To (partially) avoid this effect,
we propose an additional transformation to be performed prior to the
basic shifting transformation.

For a clause set $M$, the {\em partial flattening transformation}
is the clause set $\pf(M)$ obtained by applying
the following steps.
  \begin{description}
  \item[(0) Initialization.] Initially, let $\pf(M) := M$.
  \item[(1) Reflexivity.] Extend $\pf(M)$ by the unit clause $x\approx x \gets \top{}$.
\item[(2) Partial flattening.] 
For each clause $\calH \gets \calB$ in $\pf(M)$,
let $t_1,\ldots,t_n$ be all top-level terms occurring in the
non-equational literals
in the body $\calB$  that are proper functional terms, 
for some $n \geq 0$. Let $x_1,...,x_n$ be fresh variables.
Replace the clause $\calH \gets \calB[t_1,\ldots,t_n]$ by the clause 
\begin{displaymath}
\calH \gets \calB[x_1,\ldots,x_n] \land t_1\approx x_1 \land \cdots \land t_n \approx x_n.
\end{displaymath}
\end{description}

It should be noted that the equality symbol $\approx$ need not
be interpreted as equality, but could. (Un-)satisfiability (and logical
equivalence) is preserved even when reading it just as
`unifiability'. This can be achieved by the clause $x\approx x \gets
\top$.
One should however note that the reflexivity clause is not compatible
with introducing the $\myequal$ predicate, so this might not always be a
possibility.
(In our implementation, for this reason the reflexivity clause is not
added.)

In our running example, applying the transformations $\pf$, $\bs$ and $\rr$,
in this order, yields the following clauses (among other clauses,
which are omitted because they are not relevant to the current
discussion).
\begin{align*}
\ID r(x) \lor \ID f(x) \not\approx u & \gets \ID q(x) \land \ID p(u) &
\dom(x) & \gets x \not\approx y &
\dom(x) & \gets \ID r(x)\\
\bot & \gets x \not\approx y \land x \approx y &
\dom(y) & \gets x \not\approx y
\end{align*}
Observe that the first clause is more restricted than the clause $(**)$
above because of the additional body literal $\ID p(u)$.

The reason for not extracting constants during partial flattening is
that adding them to the domain does not cause non-termination of BUMG
methods. It is preferable to leave them in place in the body
literals because they have a stronger constraining effect than the
variables introduced otherwise.

Extracting top-level terms from equations has no effect at all.
Consider the unit clause $\bot \gets f(a) \approx b$, and its partial
flattening $\bot \gets x \approx b \land f(a) \approx x$.
Applying basic shifting yields $f(a) \not\approx x \gets x \approx
b$, and, hyperresolution with $x\approx x \gets \top $ gives $f(a)
\not\approx b \gets \top$.  This is the same result as obtained by the
transformations as defined.  This explains why top-level terms of
equational literals are excluded from the definition. (One
could consider using `standard' flattening, that is, recursively extracting
terms, but this does not lead to any improvements over the defined transformations.)

Finally, we combine basic shifting and partial flattening to give 
the {\em shifting transformation}, formally defined by $\sh := \pf
\circ \bs$, that is, $\sh(M) = \bs(\pf(M))$, for any clause set~$M$.

\begin{proposition}[Completeness of shifting]
\label{prop:result_completeness_shifting}
Let $M$ be any clause set.  
 If $\sh(M)$ is satisfiable then $M$ is satisfiable.
\end{proposition}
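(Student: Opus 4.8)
The plan is to establish completeness of the two component transformations separately and then compose them. Since $\sh = \pf \circ \bs$, that is $\sh(M) = \bs(\pf(M))$, it suffices to prove two lemmas: (a) for any clause set $N$, if $\bs(N)$ is satisfiable then $N$ is satisfiable; and (b) if $\pf(M)$ is satisfiable then $M$ is satisfiable. Applying (a) to $N := \pf(M)$ turns satisfiability of $\bs(\pf(M)) = \sh(M)$ into satisfiability of $\pf(M)$, and then (b) yields satisfiability of $M$. Throughout I work with ordinary (non-E) Herbrand interpretations, so $\approx$ is just an arbitrary binary predicate; the only property of it I use is reflexivity, which $\pf$ supplies explicitly through the clause $x \approx x \gets \top$.

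For (a) I would take a Herbrand model $I$ of $\bs(N)$ and argue that $I$ itself, read over the original signature (ignoring the extensions of the fresh predicates $\NOT{P}$), is already a model of $N$; this is legitimate because clauses of $N$ mention only original atoms. Pick a clause $C = \calH \gets B_1 \land \cdots \land B_m \land \calB$ of $N$, where $B_1,\ldots,B_m$ are exactly its body atoms containing a proper functional term, so that $\bs(N)$ contains the shifted clause $C' = \calH \lor \NOT{B_1} \lor \cdots \lor \NOT{B_m} \gets \calB$. Consider a ground instance $C\sigma$ whose body holds in $I$, so $I \models B_j\sigma$ for each $j$ and $I \models \calB\sigma$. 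Since $I \models C'\sigma$ and its body $\calB\sigma$ holds, some head literal of $C'\sigma$ holds: either $I \models \calH\sigma$, and we are done, or $I \models \NOT{B_j}\sigma$ for some $j$. The crux is to rule out the latter: writing $B_j = P(\overline{t})$, having both $I \models P(\overline{t}\sigma)$ from the body assumption and $I \models \NOT{P}(\overline{t}\sigma)$ would violate the shifted-atom consistency clause $\bot \gets P(\overline{x}) \land \NOT{P}(\overline{x})$, which lies in $\bs(N)$ precisely because $P$ is the predicate of the shifted atom $B_j$. This contradicts $I \models \bs(N)$, so $I \models \calH\sigma$. Clauses with $m = 0$ are unchanged and satisfied trivially.

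For (b) I would take a Herbrand model $I$ of $\pf(M)$; in particular $I \models x \approx x \gets \top$, so $I \models t \approx t$ for every ground term $t$. Pick a clause $C = \calH \gets \calB[t_1,\ldots,t_n]$ of $M$, where $t_1,\ldots,t_n$ are the top-level proper functional terms abstracted by $\pf$, with transform $C' = \calH \gets \calB[x_1,\ldots,x_n] \land t_1 \approx x_1 \land \cdots \land t_n \approx x_n$ and $x_1,\ldots,x_n$ fresh. For a ground instance $C\sigma$ whose body holds in $I$, extend $\sigma$ to $\tau := \sigma \cup \{ x_i \mapsto t_i\sigma \mid 1 \le i \le n \}$. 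Under $\tau$ each introduced equation becomes $t_i\sigma \approx t_i\sigma$, true by reflexivity, while $\calB[x_1,\ldots,x_n]\tau = \calB[t_1,\ldots,t_n]\sigma$ by the choice of $\tau$, which coincides with the satisfied body of $C\sigma$. Hence the body of $C'\tau$ holds in $I$, so $I \models \calH\tau$; and since the $x_i$ do not occur in $\calH$ we have $\calH\tau = \calH\sigma$, giving $I \models \calH\sigma$. Thus $I \models M$.

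I expect the main obstacle to be the disjointness argument in part (a): the entire purpose of the consistency clauses is to force the extensions of $P$ and $\NOT{P}$ to be disjoint in every model of $\bs(N)$, and it is exactly this that lets me discard the shifted head disjuncts $\NOT{B_j}$ whenever the corresponding body atom $B_j$ is true, thereby recovering the original head. The bookkeeping in (b), namely matching $\calB[x_1,\ldots,x_n]\tau$ with $\calB[t_1,\ldots,t_n]\sigma$ and checking that the $x_i$ are absent from $\calH$, is routine by comparison. One minor point worth recording is that after composition the atoms shifted by $\bs$ include the equational atoms $t_i \approx x_i$ introduced by $\pf$ (shifted to $\not\approx$ literals, as in the running example), but the modular statement of (a) for an arbitrary clause set $N$ covers equational shifted atoms with no extra work.
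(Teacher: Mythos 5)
Your proof is correct and follows essentially the same route as the paper: the paper's own proof is only a two-line sketch observing that $\bs$ is a structural (definitional) transformation and $\pf$ a form of term abstraction, and your two lemmas are precisely the standard detailed arguments behind those two observations, composed via $\sh = \pf \circ \bs$. The points you flag --- using the shifted-atom consistency clauses to discard the $\NOT{B_j}$ disjuncts, and reflexivity of $\approx$ to discharge the introduced equations --- are exactly the ingredients the paper's sketch relies on implicitly.
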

\begin{proof}
Not difficult, since $\bs$ (basic shifting) can be seen to be a structural
transformation and~$\pf$ (partial flatting) is a form of term abstraction.
\qed
\end{proof}

\begin{corollary}[Completeness of shifting wrt.\ E-interpretations]
\label{cor:result_completeness_shifting}
Let $M$ be any clause set.  
 If $\sh(M)$ is E-satisfiable then $M$ is E-satisfiable.
\end{corollary}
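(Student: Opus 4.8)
The plan is to imitate the proof of Corollary~\ref{cor:rr-E-completeness}: work with the contrapositive, convert E-(un)satisfiability into ordinary (un)satisfiability by adjoining the equality axioms, and then exploit the fact that the shifting transformation leaves the equality axioms essentially untouched. Concretely, I would assume that $M$ is E-unsatisfiable, i.e.\ that $M \cup \EAX(\Sigma_P \cup \Sigma_f)$ is unsatisfiable, and aim to conclude that $\sh(M)$ is E-unsatisfiable. Applying the contrapositive of Proposition~\ref{prop:result_completeness_shifting} to the clause set $M \cup \EAX(\Sigma_P \cup \Sigma_f)$ immediately gives that $\sh\bigl(M \cup \EAX(\Sigma_P \cup \Sigma_f)\bigr)$ is unsatisfiable.

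The crux of the argument is then to show that $\sh$ commutes with adjoining the equality axioms, that is, $\sh\bigl(M \cup \EAX(\Sigma_P \cup \Sigma_f)\bigr) = \sh(M) \cup \EAX(\Sigma_P \cup \Sigma_f)$. Since $\sh = \pf \circ \bs$, I would inspect each sub-transformation on the equality axioms. For partial flattening $\pf$, Step~(2) only extracts proper functional top-level terms occurring in \emph{non-equational} body literals; every equality axiom either has an empty body (reflexivity) or a body built solely from atoms with variable arguments (symmetry, transitivity, the function-congruence axioms, and the predicate-congruence axioms), so none of them is modified, and the reflexivity clause $\pf$ adds is already present in $\EAX(\Sigma_P \cup \Sigma_f)$. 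For basic shifting $\bs$, Step~(1) shifts only clauses whose body contains a proper functional term, and by the same inspection no equality axiom qualifies; consequently no equality axiom contributes a shifted atom, so the consistency clauses generated in Step~(2) are exactly those generated for $\sh(M)$ alone. Hence both $\pf$ and $\bs$ fix the equality axioms and distribute over the union, establishing the desired identity.

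It then follows that $\sh(M) \cup \EAX(\Sigma_P \cup \Sigma_f)$ is unsatisfiable. One loose end remains: shifting introduces the fresh predicate symbols $\NOT{P}$, so the genuine E-unsatisfiability of $\sh(M)$ is unsatisfiability of $\sh(M)$ together with the equality axioms for the \emph{extended} signature, which also contains congruence axioms for the $\NOT{P}$. But this larger axiom set is a superset of $\EAX(\Sigma_P \cup \Sigma_f)$, so by monotonicity of unsatisfiability the enlargement preserves unsatisfiability, and $\sh(M)$ is E-unsatisfiable. Taking the contrapositive yields the statement.

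I expect the only real obstacle to be the bookkeeping in the middle paragraph: carefully verifying, axiom by axiom, that neither $\pf$ nor $\bs$ alters any clause of $\EAX(\Sigma_P \cup \Sigma_f)$ and that no equality axiom induces a new shifted atom, so that the consistency clauses coincide. The treatment of the fresh $\NOT{P}$ congruence axioms is a minor point handled by the superset/monotonicity observation, exactly analogous to the subsumption step used at the end of the proof of Corollary~\ref{cor:rr-E-completeness}.
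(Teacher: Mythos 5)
Your proposal is correct and follows exactly the route the paper takes: its proof of this corollary is a one-line reduction to Proposition~\ref{prop:result_completeness_shifting} via the equality axioms, noting that the shifting transformation does not modify them, and your argument is simply a careful elaboration of that sketch (including the axiom-by-axiom check that neither $\pf$ nor $\bs$ touches $\EAX(\Sigma_P \cup \Sigma_f)$ and the monotonicity step for the fresh $\NOT{P}$ congruence axioms).
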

\begin{proof}
Using the same line of argument as in the
proof of Corollary~\ref{cor:rr-E-completeness}, proving preservation of
E-satisfiability can be reduced to proving preservation of
satisfiability by means of the equality axioms (observe that the
shifting transformation does not modify the equality axioms). \qed
\end{proof}

\section{Blocking}
\label{sec:blocking}

The final transformation introduced in this paper is called \emph{blocking} and 
provides a mechanism for detecting
recurrence in the derived models. 
The blocking transformation is designed to realize a `loop check'
for the construction of a domain, by capitalizing on available,
powerful equality reasoning technology and redundancy criteria from
saturation-based theorem proving.
To be suitable, a resolution-based prover, for instance, should support
\emph{hyperresolution-style inference, strong equality inference (for
example, superposition or ordered rewriting), splitting,
and the possibility to search for split-off equations first and standard
redundancy elimination techniques}.

The basic idea behind blocking is to add clauses that cause a case
analysis of the form $s \approx t$ versus $s \not\approx t$, for 
(ground) terms $s$ and $t$.
Although such a case analysis obviously leads to a bigger search space,
it provides a powerful technique to detect finite models with a BUMG prover.
This is because in the case that $s \approx t$ is assumed, this new equation may lead
to rewriting of otherwise infinitely many terms into one single term.
To make this possible, the prover must support the above features,
including notably splitting.
Among resolution theorem provers splitting has become standard.
Splitting was first available in the saturation-based
prover SPASS~\citep{WeidenbachSchmidtEtAl07,WeidenbachSPASS35}, but is now also part of
VAMPIRE~\citep{RiazanovVoronkov02} and E~\citep{Schulz13}. 
Splitting is an integral part of the hypertableau prover
E-KRHyper~\citep{Baumgartner:Furbach:Pelzer:HyperTableauxEquality:CADE:2007,Pelzer:Wernhard:E-KRHyper:CADE:2007}.

Blocking has the same goal as the \emph{unsound theorem proving} technique introduced
first in~\cite{Lynch:UnsoundTheoremProving:CSL:2004}. 
Instances of unsound theorem proving exemplified
in~\cite{Lynch:UnsoundTheoremProving:CSL:2004} include replacing a clause by one that
subsumes it, and by adding equations for joining equivalence classes in the abstract
congruence closure framework. Unsound theorem proving has been incorporated later in
DPLLT-based theorem proving~\cite{Bonacina:etal:SpeculativeInferences:JAR:2011}.

In the following we introduce four different, but closely related, blocking
transformations, called
\emph{subterm domain blocking}, \emph{subterm predicate blocking},
\emph{unrestricted domain blocking} and  \emph{unrestricted predicate
blocking}.
Subterm domain blocking was introduced in the short version of this
paper under the name blocking \citep{BaumgartnerSchmidt06}.
Subterm predicate blocking is inspired by and related to the blocking
technique described in \cite{HustadtSchmidt99b}.
Unrestricted domain blocking is the first-order version of the
unrestricted blocking rule introduced in \citet{SchmidtTishkovsky07b}
and used for developing terminating tableau calculi for logics with the effective
finite model property in~\cite{SchmidtTishkovsky08b,SchmidtTishkovsky11a}.

\subsection{Subterm Domain Blocking}

By definition, the {\em subterm domain blocking transformation}
of a clause set $M$
is the clause set $\blsd(M)$ obtained from $M$ by carrying
out the following steps.
  \begin{description}
  \item[(0) Initialization.] Initially, let $\blsd(M) := M$.
  \item[(1) Axioms describing the subterm relationship.]
Let $\subterm$ be a `fresh' binary predicate symbol not in $\Sigma_P$.
 Extend $\blsd(M)$ by
 \begin{align*}
   \subterm(x,x) & \gets \dom(x)
 \end{align*}
and, for every $n$-ary function symbol $f \in \Sigma_f$ and all
$i\in \{1,\ldots,n\}$, add the clauses
\begin{align*}
   \subterm(x,f(x_1,\ldots,x_n)) & \gets \subterm(x,x_i) \land \dom(x) \land \dom(f(x_1,\ldots,x_n)).
 \end{align*}

\item[(2) Subterm equality case analysis.] 
 Extend $\blsd(M)$ by these clauses.
 \begin{align*}
   x\approx y \lor x \not\approx y & \gets \subterm(x,y) \\
   & \gets x\approx y \land x \not\approx y
 \end{align*}
\end{description}
The subterm domain blocking transformation allows to contemplate
whether two domain elements that are in a subterm relationship
should be identified and merged, or not.

This blocking transformation preserves
range-restrictedness. In fact, because the $\dom$
predicate symbol is mentioned in the definition, the blocking
transformation can be applied meaningfully only in combination
with range-restricting transformations.

Reading $\subterm(s,t)$ as `$s$ is a subterm of $t$', Step~(1) in
the blocking transformation might seem overly involved, because an apparently
simpler specification  of the subterm relationship for the terms of
the signature $\Sigma_f$ can be given.
Namely:
\begin{align*}
   \subterm(x,x) & \gets \dom(x) &
   \subterm(x,f(x_1,x_2\ldots,x_n)) & \gets \subterm(x,x_i)
 \end{align*}
for every $n$-ary function symbol $f \in \Sigma_f$ and all $i\in\{1,\ldots,n\}$.
This clause set is range-restricted. 
Yet, this specification is not suitable for our purposes.
The problem is that the second clause introduces proper functional
terms.

For example, for a given constant $\ID a$ and a unary function symbol
$\ID f$, when just $\dom(\ID a)$ alone has been derived, a BUMG
procedure derives an infinite sequence of clauses:
\begin{displaymath}
\subterm(\ID a, \ID a), \subterm(\ID a, \ID f(\ID a)), \subterm(\ID a,
\ID f(\ID f(\ID a))), \ldots.
\end{displaymath}
This does not happen with the specification in Step~(1).
It ensures that conclusions of
BUMG inferences involving $\subterm$ are about terms currently in
the domain, and the domain is always finite.

To justify the clauses added in Step~(2) we continue this example and
suppose an interpretation that contains $\dom(\ID a)$ and $\dom(\ID
f(\ID a))$.  These might have been derived earlier in
the run of a BUMG prover.
Then, from the clauses added by blocking, the (necessarily ground) disjunction 
\begin{displaymath}
  \ID f(\ID a) \approx \ID a \lor \ID f(\ID a) \not\approx \ID a \gets
\top
\end{displaymath}
is derivable.

Now, it is important to use a BUMG prover with support for
splitting and to equip it with an appropriate search strategy.
In particular, when deriving a disjunction such as the one above, the
$\approx$-literal should be split off and the clause set obtained in
this case should be searched {\em first\/}.  The reason is that the
(ground) equation $\ID f(\ID a) \approx \ID a$ thereby obtained can
then be used for simplification and redundancy testing purposes.
For example, should $\dom(\ID f(\ID f(\ID a)))$ be derivable now (in the
current branch), 
then any prover based a modern, saturation-based theory of
equality reasoning is able to prove it redundant from $\ID f(\ID
a) \approx \ID a$ and $\dom(\ID a)$. Consequently, the domain is {\em not\/}
be extended {\em explicitly\/}. The information that $\dom(\ID
f(\ID f(\ID a)))$ is in the domain is however implicit via the
theory of equality.

\subsection{Subterm Predicate Blocking}
\label{sec:subterm-predicate-blocking}
Subterm domain blocking defined in the previous
section applies blocking to \emph{domain terms} where one is a
proper subterm of the other.
The idea of the \emph{subterm (unary) predicate blocking transformation} is
similar, but it merges only the (sub)terms in the extension of \emph{unary}
predicate symbols different to $\dom$ in the current interpretation.

Subterm predicate blocking is
defined as follows:

  \begin{description}
  \item[(0) Initialization.] Initially, let $\blsp(M) := M$.
  \item[(1) Axioms describing the subterm relationship.]
    Same as Step~(1) in the definition of $\blsd$.
\item[(2) Subterm equality case analysis.] 
 Extend $\blsp(M)$ by these clauses, for each unary predicate symbol
 $p \in \Sigma_P$. (Recall that $\Sigma_P$ does not contain $\dom$.)
 \begin{equation*}
   x\approx y \lor x \not\approx y  \gets \subterm(x,y) \land p(x) \land p(y)
 \end{equation*}
Finally, add the clause 
\[
{}\gets x\approx y \land x \not\approx y
\]
to $\blsp(M)$.
\end{description}

Observe that the only difference between this transformation and 
the subterm domain blocking transformation lies in Step (2). 
The clauses $x\approx y \lor x \not\approx y \gets \subterm(x,y) \land p(x) \land
p(y)$ added here are obviously more restrictive than their counterpart
$x\approx y \lor x \not\approx y \gets \subterm(x,y)$ in the definition
of the subterm domain blocking transformation \blsd.

That subterm predicate blocking is strictly more
restrictive can be seen from the following example, which also
helps to explain the rationale behind this transformation.
\begin{align*}
\ID p( \ID a)  & \leftarrow &
\ID q( \ID f(x))  & \leftarrow \ID p(x)
\end{align*}
Any BUMG prover terminates on the transformed clause set and
returns the model 
\begin{equation*}
  \{ \dom(\ID a),\ \dom(\ID f(\ID a)),\ \ID p( \ID a),\ \ID q( \ID
  f(\ID a)),\ \subterm(\ID a,\ID f(\ID a)) \}.
\end{equation*}
Notice that the subterm predicate blocking transformation includes the
clauses
\begin{align*}
 x\approx y \lor x \not\approx y & \gets \subterm(x,y) \land \ID p(x) \land \ID p(y) \\
 x\approx y \lor x \not\approx y & \gets \subterm(x,y) \land \ID q(x) \land \ID q(y).
\end{align*}
These are however only applicable for $\subterm(\ID s,\ID s)$, $\ID
p(s)$ and $\ID q(s)$ which only lead to redundant BUMG inferences.
The motivation behind these clauses is to block two $\ID p$-literals (say)
only when there are two literals $\ID p(s)$ and $\ID p(t)$ where~$s$ is a subterm
of~$t$.
Conversely, if no such loop comes up, as in the example above, there is
no reason for blocking. By contrast, the subterm domain blocking
transformation $\blsd$ with its clause 
$x\approx y \lor x \not\approx y  \gets \subterm(x,y)$ would be applicable even for distinct terms, leading to the
(unnecessary) split into the cases $\ID a \approx \ID f(\ID a)$ and $\ID a \not\approx \ID f(\ID a)$.

From a more general perspective, the $\blsp$ transformation is
motivated by the application to description logic knowledge bases~\citep{HustadtSchmidt99b,BaaderSattler01}.
Often, such knowledge bases do not contain cyclic definitions, or
only few definitions are cyclic.
The subterm predicate transformation aims to apply blocking only to
concepts (unary predicates) with cyclic definitions.
Below, in Section~\ref{section_DL_example}, we discuss a description
logic example to highlight the differences between the various
blocking transformations.

\subsection{Unrestricted Domain Blocking}

The two previous `subterm' variants of the blocking transformation
allow to speculatively identify terms \emph{and their subterms}. 
The `unrestricted' variants introduced next differ from both by
allowing speculative identifications of \emph{any} two terms. 

For the `domain' variant, called \emph{unrestricted domain blocking
transformation}, the definition is as follows.
  \begin{description}
  \item[(0) Initialization.] Initially, let $\blud(M) := M$.
  \item[(1) Domain elements equality case analysis.] 
 Extend $\blud(M)$ by these clauses.
 \begin{align*}
   x\approx y \lor x \not\approx y & \gets \dom(x) \land \dom(y) \\
   & \gets x\approx y \land x \not\approx y
 \end{align*}
\end{description}

There is a clear trade-off between this transformation and the
subterm domain blocking transformation \blsd.
On the one hand, the unrestricted domain blocking transformation
induces a larger search space, as the bodies of the clauses $x\approx y
\lor x \not\approx y \gets \dom(x) \land \dom(y)$ are less constrained
than their counterparts in the subterm domain blocking transformation.
This becomes obvious after extending the
clause body of $x\approx y \lor x \not\approx y \gets \subterm(x,
y)$ from the $\blsd$ transformation with $\dom(x) \land \dom(y)$, which does not change anything.
On the other hand, the unrestricted domain blocking transformation
enables the finding of models with smaller domains.
This means fewer congruence classes on the Herbrand terms are induced
by the equality relation~$\approx$.
As our experiments show, such models can often be found quicker
in satisfiable problems, even for the $\crr$ transformation.

Using the ideas of the termination proof in~\cite{SchmidtTishkovsky13a}
for semantic ground tableau with unrestricted domain blocking for
description logics with the expressive power similar to the two-variable
fragment of first-order logic,
it can be shown BUMG with unrestricted domain blocking can return
finite models, if they exist,
even for problems of undecidable fragments.
Carrying over also the results in~\cite{SchmidtTishkovsky08b} implies
unrestricted domain blocking can be used in BUMG methods to return domain
minimal models for logics with the effective finite model property.

\subsection{Unrestricted Predicate Blocking}

The definition of the last variant of blocking, the \emph{unrestricted
(unary)  predicate blocking transformation}, is as follows.
  \begin{description}
  \item[(0) Initialization.] Initially, let $\blup(M) := M$.
\item[(1) Term equality case analysis.] 
 Extend $\blup(M)$ by these clauses, for each unary predicate symbol
 $p \in \Sigma_P$.
 \begin{equation*}
   x\approx y \lor x \not\approx y  \gets p(x) \land p(y)
 \end{equation*}
Finally, add the clause 
\[
{}\gets x\approx y \land x \not\approx y
\]
to $\blup(M)$ .
\end{description}
This transformation allows to equate any two (distinct) terms in a
$p$-relation, if there are any. The motivation is a
combination of the above, to block cycles on $p$-literals if they
arise, and to compute models with small domains.

\subsection{Comparison on an Example}
\label{section_DL_example}

It is instructive to compare the effects on the returned models of
the four blocking transformations on an example from
description logics.  To this end, consider the description
logic knowledge base (left) and its translation into
clause logic (right) in Table~\ref{table_DL_example}.
Notice that the   cycle in the inclusion statements in
the TBox (for~$p_1$ and~$p_2$) means some form of blocking is needed
for decidability in tableau-based description logic systems.
Likewise, blocking is needed  to
force BUMG methods to terminate on the translated clause form. 
Any of the four blocking transformations defined above suffice.
Table~\ref{table_blocking_comparison_by_example} summarizes the
behaviour of these transformations, in terms of interesting relations
in the computed model.

\begin{table}[tbh]
\caption{Sample description logic knowledge base and clausal form}
\label{table_DL_example}
\begin{tabular}{cc}
  TBox & ABox \\\hline \hline
  \begin{math}
    \begin{aligned}
\rule{0pt}{2.7ex}  \ID p_1 & \sqsubseteq \exists\: \ID r . \ID p_2 \\
      \ID p_2 & \sqsubseteq \exists\: \ID r . \ID p_1 \\
\rule[-1.5ex]{0pt}{-1.5ex}      \ID p_1 & \sqsubseteq \exists\: \ID s . \ID q 
    \end{aligned}
  \end{math}
& 
\begin{math}
  \begin{aligned}
    \ID p_1(\ID a) & \\
    \ID p_1(\ID b) &
  \end{aligned}
\end{math}
\end{tabular}
\qquad\qquad\qquad
\begin{math}
\begin{aligned}
  \ID p_2(\ID f(x)) & \leftarrow \ID p_1(x) & 
  \ID q(\ID h(x)) & \leftarrow \ID p_1(x) \\  
\ID r(x,\ID f(x)) & \leftarrow \ID p_1(x)  & \quad 
  \ID s(x,\ID h(x)) & \leftarrow \ID p_1(x)\\
\ID p_1(\ID g(x)) & \leftarrow \ID p_2(x) &
  \ID p_1(\ID a) & \leftarrow   \\
\ID r(x,\ID g(x)) & \leftarrow \ID p_2(x) &
  \ID p_1(\ID b) & \leftarrow
\end{aligned}
\end{math}
\end{table}

\begin{table}[hbt]
\caption{Partial truth assignments in models computed for the sample knowledge base}
\label{table_blocking_comparison_by_example}
$
\renewcommand{\arraystretch}{1.2}
\begin{array}{l@{\qquad}l@{\qquad}l@{\qquad}l@{\qquad}l@{\qquad}l}
\multicolumn{1}{l}{\text{Blocking}} & 
\multicolumn{1}{l}{\dom} &
\multicolumn{1}{l}{\approx} &
\multicolumn{1}{l}{\ID p_1} & 
\multicolumn{1}{l}{\ID p_2} & 
\multicolumn{1}{l}{\ID q} \\\hline\hline
\blsd & 
\ID a, \ID b &
\begin{array}[t]{@{}l}
\IDall{f(a)\approx a}, \IDall{f(b)\approx b}, \\
\IDall{g(a)\approx a}, \IDall{g(b)\approx b},\\
\IDall{h(a)\approx a}, \IDall{h(b)\approx b}
\end{array} &
\ID a, \ID b &
\ID a, \ID b &
\ID a, \ID b \\\hline
\blsp & 
\begin{array}[t]{@{}l}
\ID a, \ID b, \\
\IDall{f(a)}, \IDall{f(b)}, \\
\IDall{h(a)}, \IDall{h(b)}
\end{array} &
\begin{array}[t]{l}
\IDall{g(f(a))\approx a},\\
\IDall{g(f(b))\approx b}
\end{array} &
\ID a, \ID b &
\IDall{f(a)}, \IDall{f(b)} &
\IDall{h(a)}, \IDall{h(b)} \\\hline
\blud & 
\ID b &
\begin{array}[t]{@{}l}
\IDall{a\approx b}, \IDall{f(b)\approx b},\\
\IDall{g(b)\approx b}, \IDall{h(b)\approx b}
\end{array} &
\ID b &
\ID b &
\ID b\\\hline
\blup & 
\ID b, \IDall{f(b)}, \IDall{h(b)} &
\begin{array}[t]{@{}l}
\IDall{a\approx b},\\
\IDall{g(f(b))\approx b} 
\end{array} &
\ID b &
\IDall{f(b)} &
\IDall{h(b)}
\end{array}
$
\end{table}

When comparing in detail the blocking techniques developed for description
logics it becomes clear
that
\[
\text{the transformations $\rr \circ \tau$ and $\sh \circ\rr \circ \tau$, for $\tau \in \{ \blsd, \blsp,
\blud, \blup \}$},
\]
when applied to a knowledge base
with the finite model property, 
in conjunction with a suitable BUMG method (see above), can be refined to
simulate various forms of standard blocking techniques used in
description logic systems,
including subset ancestor blocking and equality ancestor
blocking, cf.~\cite{HustadtSchmidt99b}, \cite{SchmidtTishkovsky13a}
and~\cite{KhodadadiSchmidtTishkovsky13a}.
Because standard loop checking mechanisms used in description logic
systems do not require backtracking, 
appropriate search strategies and restrictions for performing inferences
and applying blocking need to be used.

An advantage of our approach to blocking as opposed to blocking
without equality reasoning used in mainstream description logic
systems~\citep{BaaderSattler01} is that it applies to any first-order
clause set, not only to clauses from the translation of description
logic problems.
This makes the approach very general and widely applicable.

For instance, our approach makes it possible to extend description
logics with arbitrary (first-order expressible) `rule' languages.
`Rules' provide a connection to (deductive) databases and are being
used to represent information that is currently not expressible in
the description logics associated with OWL~DL.
The specification of many natural properties of binary relations and
complex statements involving binary relations are
outside the scope of most current description logic systems.
An example is the statement:
individuals who live and work at the same location are home
workers.
This can be expressed as a Horn rule (clause)
\[
\ID{homeWorker}(x) \gets \ID{work}(x, y)\land \ID{live}(x, z )
\land \ID{loc}(y,w) \land \ID{loc}(z,w),
\]
but, with some
exceptions~\citep{HustadtSchmidtWeidenbach99,WeidenbachSchmidtEtAl07},
is not expressible in current description logic systems.

\section{Soundness and Completeness of the Transformations}
\label{sec:soundness:completeness}

Each of the blocking transformations is complete:

\begin{proposition}[Completeness of blocking wrt.\
  E-interpretations]
\label{result_completeness_blocking}
Let $M$ be any clause set.  
For all $\tau \in \{ \blsd,\ \blsp,\ \blud,\ \blup \}$,  
if $\tau(M)$ is E-satisfiable then~$M$ is E-satisfiable.
\end{proposition}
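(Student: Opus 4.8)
The plan is to observe that, in this direction, completeness is immediate from monotonicity of (E-)satisfiability under the addition of clauses, because each of the four blocking transformations is purely \emph{additive}: it never rewrites or deletes any clause of~$M$, it only superimposes further constraints. This is in sharp contrast with Proposition~\ref{prop:rr-completeness}, where $\rr$ rewrites head clauses and a quasi-Herbrand model has to be rebuilt from scratch.

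First I would confirm the additivity claim by inspecting the four definitions. In every case Step~(0) sets $\tau(M) := M$, and the remaining steps only \emph{extend} $\tau(M)$: by the subterm axioms and the subterm equality case-analysis clauses for $\blsd$ and $\blsp$, and by the (domain or predicate) equality case-analysis clauses for $\blud$ and $\blup$. Hence $M \subseteq \tau(M)$ for every $\tau \in \{\blsd, \blsp, \blud, \blup\}$. I would also record that the only genuinely new symbols introduced are the \emph{predicate} symbols $\subterm$ (for the subterm variants) and the disequality predicate $\not\approx$; since no new function symbols appear, the ground terms over which congruences are formed are exactly those of~$M$.

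Then the conclusion follows by monotonicity. Suppose $\tau(M)$ is E-satisfiable, so some interpretation $I$ satisfies $I^\E \models \tau(M)$, that is, $I \models_\E C$ for every clause $C \in \tau(M)$. Since $M \subseteq \tau(M)$, in particular $I \models_\E C$ for every $C \in M$, whence $I \models_\E M$ and $M$ is E-satisfiable. (If an E-model over the signature of~$M$ alone is wanted, one restricts $I^\E$ to the symbols occurring in~$M$; it remains a congruence and still satisfies every clause of~$M$.) There is essentially no obstacle: the single point requiring care is the additivity of each~$\tau$, which a direct reading of the definitions settles. The genuinely substantive direction is the converse, soundness---showing that E-satisfiability of~$M$ yields E-satisfiability of $\tau(M)$, where one must actually construct a model of the added case-analysis clauses, typically by merging terms via~$\approx$---and that is where the real work would lie.
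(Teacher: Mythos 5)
Your proposal is correct and is exactly the paper's argument: the proof there simply notes that $M \subseteq \tau(M)$ by definition, so any E-model of $\tau(M)$ is an E-model of $M$. Your additional remarks (only predicate symbols are added, restriction to the signature of $M$, and the observation that the substantive direction is soundness) are consistent with the paper's surrounding discussion.
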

\begin{proof}
  Not difficult, as $M \subseteq \tau(M)$ by definition. \qed
\end{proof}
The converse, that is, soundness of the transformation, is easy to prove.
One basically needs to observe that the clauses added in respectively
Steps~(2) and (1) of the blocking transformations, realize a case distinction over whether two terms
are equal or not.
Trivially, one of the two cases always holds.

Putting all the transformations and the corresponding results together we can 
state the main theoretical result of the paper.

\begin{theorem}[Completeness of the combined transformations with
respect to E-interpretations]
\label{result_combined_translations_completeness}
  Let $M$ be a clause set and suppose $\tr$ is any of the
  transformations in $\{ \rr, \sh \circ \rr \} \cup \{ \rr \circ \tau,
  \sh \circ \rr \circ \tau \mid \tau \in \{ \blsd,\ \blsp,\ \blud,\ \blup \} \}$ or 
  $\{ \crr, \sh \circ \crr \} \cup \{ \crr \circ \tau,
  \sh \circ \crr \circ \tau \mid \tau \in \{ \blsd,\ \blsp,\ \blud,\ \blup \} \}$. 
Then: 
\begin{enumerate}[(i)]
\item
$\tr(M)$ is range-restricted. 
\item
$\tr(M)$ can be computed in quadratic time.
\item
If $\tr(M) \cup \{ x\approx x \gets \dom(x) \}$ is E-satisfiable then $M$ is E-satisfiable.
\end{enumerate}
\end{theorem}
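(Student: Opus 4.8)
The plan is to prove all three parts compositionally, treating each composite $\tr$ as a chain of the individual transformations $\sh$, $\rr$ (or $\crr$) and a blocking step $\tau$, and invoking the per-transformation results already available. The one point to fix at the outset is the order of application dictated by the paper's convention $\sh(M)=\bs(\pf(M))$, i.e. the leftmost factor in a $\circ$-expression is applied first. Thus $\sh\circ\rr\circ\tau$ denotes $\tau(\rr(\sh(M)))$: shifting is performed first, then range-restriction, then blocking. Blocking coming last is in fact forced, since its clauses refer to the $\dom$ predicate introduced by range-restriction. With this reading every composite ends in $\rr$ or $\crr$, possibly followed by a single blocking step, and begins with an optional $\sh$.

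For (i) I would observe that $\rr$ and $\crr$ yield a range-restricted set for \emph{any} input: this is Proposition~\ref{result_rr_Horn_preservation}(iii) for $\rr$, and the remark following the definition of $\crr$ for $\crr$. Since an optional $\sh$ is applied \emph{before} $\rr$/$\crr$, it is irrelevant to range-restrictedness of the result. It then remains to check that a blocking step, applied last, preserves range-restrictedness; this is immediate by inspecting the clauses added in Steps~(1)/(2) of each of $\blsd,\blsp,\blud,\blup$: the $\subterm$-axioms and the case-analysis clauses all have every head variable occurring in the body, and the integrity clause has empty head. For (ii) I would track sizes: $\sh$ and $\crr$ are linear in the input, while $\rr$ has linear-size output computable in quadratic time (Proposition~\ref{result_rr_Horn_preservation}(i),(ii)); a blocking step adds clauses whose total size is linear in the signature, hence linear in the input. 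Consequently sizes stay linear through the whole chain and the total running time is dominated by the single $\rr$ stage, giving the quadratic bound.

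Part (iii) is the substance. The strategy is to peel the transformations off one layer at a time, each time using the relevant backward preservation of E-satisfiability: Proposition~\ref{result_completeness_blocking} for a blocking layer $\tau$ (which is just monotonicity, $N\subseteq\tau(N)$), Corollary~\ref{cor:rr-E-completeness} for an $\rr$ layer, and Corollary~\ref{cor:result_completeness_shifting} for a $\sh$ layer. Concretely, from the hypothesis that $\tau(\rr(\sh(M)))\cup\{x\approx x\gets\dom(x)\}$ is E-satisfiable, monotonicity gives that $\rr(\sh(M))$ is E-satisfiable; Corollary~\ref{cor:rr-E-completeness} then yields E-satisfiability of $\sh(M)$; and Corollary~\ref{cor:result_completeness_shifting} yields E-satisfiability of $M$. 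The auxiliary clause $x\approx x\gets\dom(x)$ is the range-restricted reflexivity axiom; in the reduction to $\EAX$ underlying the cited corollaries it is subsumed by the full reflexivity axiom $x\approx x$, exactly as in the proof of Corollary~\ref{cor:rr-E-completeness}, so it is carried harmlessly and dropped when no longer needed. For the $\crr$-family there is no ready-made E-completeness corollary, so the remaining task is to prove one: that $\crr(N)$ E-satisfiable implies $N$ E-satisfiable. I would obtain it by repeating the argument of Corollary~\ref{cor:rr-E-completeness} with $\crr$ in place of $\rr$, checking that $\crr(N\cup\EAX)$ equals $\crr(N)\cup(\EAX\setminus\{x\approx x\})\cup\{x\approx x\gets\dom(x)\}$ (the Herbrand-enumeration clauses of Step~(3) are added regardless of $\EAX$, and among the equality axioms only reflexivity fails to be range-restricted), after which the subsumption argument closes as before.

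I expect the main obstacle to be the careful bookkeeping around equality in part (iii): establishing the $\crr$ analogue of Corollary~\ref{cor:rr-E-completeness}, and making sure the $\myequal$ refinement of positive equality (whose preservation of E-satisfiability the text only asserts informally) interacts correctly with the blocking-introduced $\approx$/$\not\approx$ literals and with the reflexivity clause $x\approx x\gets\dom(x)$. The composition itself is routine once the application order is fixed and monotonicity is used for the blocking layer; the real subtlety lies in verifying that each intermediate clause set genuinely meets the hypotheses of the corollary invoked to strip the next layer.
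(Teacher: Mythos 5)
Your proposal is correct and follows essentially the route the paper intends: the theorem is obtained by composing the per-transformation results (Proposition~\ref{result_rr_Horn_preservation}, Corollaries~\ref{cor:rr-E-completeness} and~\ref{cor:result_completeness_shifting}, and Proposition~\ref{result_completeness_blocking}) in the application order shifting, then range-restriction, then blocking, which is all the paper means by saying the proofs are ``easy or completely standard''. You also correctly identify the one genuine gap left implicit in the paper---the missing E-satisfiability analogue of Corollary~\ref{cor:rr-E-completeness} for $\crr$---and your proposed fix via the same $\EAX$ argument is the right one.
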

The reverse directions of (iii), that is, soundness of the respective
transformations, hold as well.  
The proofs are either easy or completely standard. 

By carefully modifying the definition of $\rr$ 
it is possible to compute the reductions in linear
time.
\begin{proposition}
Let $M$ and $\tr$ be as in the previous result. Then:
\begin{enumerate}[(i)]
\item
The size of $\tr(M)$ is bounded by a linear function
in the size of $M$.
\item
$\tr(M)$ can be computed in linear time.
\end{enumerate}
\end{proposition}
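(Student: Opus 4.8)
The plan is to treat both claims as purely resource-theoretic statements, since range-restrictedness and preservation of E-satisfiability are already settled by Theorem~\ref{result_combined_translations_completeness}. The key observation is that every admissible $\tr$ is a composition of at most four maps drawn from the fixed finite set $\{ \crr, \rr, \pf, \bs \} \cup \{ \blsd, \blsp, \blud, \blup \}$ (recalling $\sh = \pf \circ \bs$; the longest cases are $\pf \circ \bs \circ \rr \circ \tau$ and $\pf \circ \bs \circ \crr \circ \tau$). Because the number of composed maps is bounded by a constant independent of $M$, it suffices to prove that each individual transformation $T$ in this set produces output of size $O(|N|)$ on input $N$ and can be computed from $N$ in time $O(|N|)$. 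The two claims for $\tr$ then follow by chaining: a constant-length composition of maps each blowing up size by at most a constant factor still blows up size by (the product, hence) a constant factor, and running the linear-time stages one after another on linearly growing intermediate sets costs linear time overall.

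For part~(i) I would verify the per-transformation linear size bound case by case. For $\crr$ this is already recorded in the text. For $\rr$ the linear size bound is exactly Proposition~\ref{result_rr_Horn_preservation}(i), so nothing new is needed. For each blocking transformation $\tau$, Step~(1) contributes the reflexivity clause together with one subterm-propagation clause per argument position of each $f \in \Sigma_f$, whose total size is proportional to $\sum_{f \in \Sigma_f} \mathrm{arity}(f)$, and Step~(2) contributes either a constant number of clauses ($\blsd$, $\blud$) or one case-analysis clause per unary predicate symbol ($\blsp$, $\blup$); both are linear in $|M|$, since every symbol and arity counted is witnessed in $M$. For $\pf$ each clause grows by one equation and one fresh variable per extracted proper top-level body term, a linear increase, and for $\bs$ shifting relocates existing body atoms into the head and adds one consistency clause per shifted predicate symbol, again linear. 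Hence every constituent is linear in size and (i) follows.

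For part~(ii), most constituents are plainly computable in a single linear-time pass; the delicate one is $\rr$, whose naive implementation is only quadratic. The source of the quadratic cost is Step~(2): for a body atom $P(t_1,\dots,t_n)$ the definition emits one clause $\dom(x_i)\alpha \gets P(s_1,\dots,s_n)$ for each non-variable argument, and each such clause carries a full copy of the abstracted atom $P(s_1,\dots,s_n)$, so an atom with $k$ non-variable arguments and size $s$ generates $\Theta(k\,s)$ output. The remedy, promised in the remark preceding the statement and paid for by ``some duplication,'' is a definitional/structure-sharing device: introduce a fresh predicate naming the abstracted body atom once (via a single auxiliary clause) and let the individual $\dom$-clauses refer to that name, so that the abstracted atom is written out only once per body atom rather than once per argument. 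This keeps the work per clause linear in the clause's size while preserving range-restrictedness and E-satisfiability, yielding a linear-time (and still linear-size) computation of $\rr$.

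The main obstacle is precisely this linear-time realisation of $\rr$: one must exhibit the modified Step~(2) explicitly and check that the naming device does not disturb the completeness argument of Proposition~\ref{prop:rr-completeness}, since the auxiliary predicate must behave exactly like the abstracted atom it names, so that the domain clauses fire under the same conditions. Once that is in place, closing the proof is routine: feeding the linear-size output of one stage into the next, and composing only constantly many linear-time stages, gives overall linear size and linear time, establishing (i) and (ii).
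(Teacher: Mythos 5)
The paper states this proposition without any proof --- the only hint it offers is the one-sentence remark immediately above it that ``by carefully modifying the definition of $\rr$ it is possible to compute the reductions in linear time'' --- so there is no argument of the authors' to measure yours against. Your overall architecture (establish a linear size bound and linear-time computability for each constituent transformation, then compose constantly many such stages) is certainly the intended one, and part~(i) is essentially fine granted the paper's own Proposition~\ref{result_rr_Horn_preservation}(i). One caveat you should make explicit: the componentwise size bounds are linear only if arities are treated as bounded. Step~(4) of $\rr$ emits $n$ clauses of size $\Theta(n)$ for each $n$-ary predicate symbol, and Step~(1) of $\blsd$ and $\blsp$ does the same for function symbols, which is quadratic in the arity; the paper's own discussion of SYO600-1 (a 13.7-fold blowup driven by 64-ary predicates) documents exactly this effect. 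The same caveat afflicts Proposition~\ref{result_rr_Horn_preservation}(i), so you are consistent with the paper, but the convention is doing real work.

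The substantive gap is in part~(ii), precisely at the step you yourself identify as delicate. Your proposed naming device is not worked out far enough to be checkable, and as sketched it does not obviously achieve linearity: introducing a single auxiliary predicate for the abstracted atom $P(s_1,\ldots,s_n)$ removes the $k$ repeated copies of that atom, but each residual clause $\dom(x_i)\alpha \gets \ID{aux}(\ldots)$ must remain range-restricted, so the auxiliary atom must carry at least the variables of $t_i$. If it carries the variables of all extracted arguments (the natural choice if there is to be only one auxiliary clause), each of the $k$ residual clauses again contains an atom of size up to $\Theta(n)$ and the $\Theta(kn)$ term survives; if instead you use one auxiliary predicate per argument position, you reintroduce $k$ copies of the $P$-atom. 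Note also that the paper says linear time is achieved ``at the expense of some duplication,'' which suggests the intended device is not structure sharing. Until the modified Step~(2) is exhibited concretely, with its effect on size, range-restrictedness and the completeness argument of Proposition~\ref{prop:rr-completeness} verified, part~(ii) remains a plausible claim rather than a proof --- though, to be fair, the paper leaves exactly the same step unproved.
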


\section{Decidability of \BS classes}
\label{sec:other}

The Bernays-Sch\"onfinkel class can be decided using transformations
into range-restricted clauses.
Formulae in the Bernays-Sch\"onfinkel class are conjunctions of
function-free and equality-free formulae of the form
$\exists^*\forall^* \psi$, where $\psi$ is free of quantifiers.
A clause is a \emph{BS clause} iff all functional terms occurring in it are
constants.

It is proved in~\cite{SchmidtHustadt05b} that hyperresolution and any
refinements decide the class of range-restricted \BS clauses without equality.
Here assume that the language includes equality.
\begin{theorem}
The class of range-restricted \BS clauses (with equality),
is decidable by hyperresolution (and paramodulation) and all refinements.
\end{theorem}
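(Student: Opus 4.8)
The plan is to show that the range-restricted BS class with equality is decided by hyperresolution with paramodulation. The key observation is that on range-restricted clauses, the hyperresolution rule only ever produces positive clauses (facts), because a non-positive clause $H \gets \calB$ with $n \neq 0$ body atoms resolves against positive clauses to yield $(C_1 \lor \cdots \lor C_n \lor H)\sigma \gets \top$. So the derived clauses are all positive. I would first argue that on BS clauses (where all functional terms are constants), range-restriction forces the head variables to be shielded by body atoms, so every positive clause derived by hyperresolution is ground. This is the crux: a positive clause $\calH \gets \top$ is range-restricted only if it is ground (as noted in Section~\ref{sec:preliminaries}), and since hyperresolution preserves range-restriction and produces only positive clauses, all derived clauses are ground.

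**Next I would bound the term universe.** Since BS clauses contain no proper functional terms, the only ground terms available are the finitely many constants occurring in the input clause set (together with any constant introduced, e.g. the one from Step~(1) of a transformation if present, but the statement is about BS clauses as given). Call this finite set $C$. Every ground atom that can appear in a derived clause has the form $P(c_1,\ldots,c_m)$ with each $c_i \in C$ and $P \in \Sigma_P$, including equality atoms $c_i \approx c_j$. As $\Sigma_P$ and $C$ are finite, there are only finitely many such ground atoms, hence only finitely many ground clauses (up to the set-of-literals reading, with factoring reducing to duplicate elimination). Therefore the set of derivable positive clauses is finite, and any fair saturation terminates.

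**For the equality part** I would check that adding paramodulation (or superposition/ordered rewriting) does not break this finiteness. Paramodulation between ground positive clauses, using a derived equation $c_i \approx c_j$, only rewrites one constant to another constant already in $C$, so it stays within the same finite ground signature and produces no new terms. Thus the closure under hyperresolution \emph{and} paramodulation is still drawn from the same finite pool of ground atoms over $C$, and the saturation remains finite. Because any refinement of hyperresolution (any admissible ordering or selection restriction) only \emph{removes} inferences from the complete calculus while preserving refutational completeness, the termination argument carries over verbatim to all refinements, and completeness guarantees that a refutation is found iff the clause set is unsatisfiable.

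**The main obstacle** I anticipate is handling equality cleanly: one must confirm that paramodulation into the ground derived clauses cannot manufacture new terms and that the reflexivity/congruence machinery stays confined to $C$. In particular I would need to verify that the interaction of the equality inference rules with the positive ground clauses produces only atoms over the fixed finite signature, and that redundancy/simplification (which the refinements may employ) is compatible with — indeed accelerates — termination. Once the ``only ground positive clauses over a finite constant set are derivable'' invariant is established and shown stable under paramodulation, decidability follows immediately from the finiteness of the saturation together with refutational completeness of hyperresolution with paramodulation.
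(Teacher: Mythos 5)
Your proposal is correct and follows essentially the same route the paper takes: the paper gives no explicit proof, deferring to the cited equality-free result and remarking that ``what is crucial for deciding the \BS class is a grounding method,'' which is precisely your invariant that range-restriction forces all derived positive clauses to be ground over the finite set of constants, with paramodulation merely replacing constants by constants. Your write-up fills in the details (finiteness of the ground atom pool, stability under equality inferences, inheritance by refinements) that the paper leaves implicit, but introduces no new idea beyond the paper's intended argument.
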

This means all refinements of hyperresolution (and some form of equality
reasoning) combined with any translation into range-restricted clauses
is a decision procedure for the \BS class.

Therefore:
\begin{corollary}
Let $M$ be any set of \BS clauses, and suppose $\tr$ is any of the
transformations 
in 
$\{ \rr, \sh \circ \rr \} \cup \{ \rr \circ \tau,
  \sh \circ \rr \circ \tau \mid \tau \in \{ \blsd,\ \blsp,\ \blud,\ \blup \} \}$
and
$\{ \crr, \sh \circ \crr \} \cup \{ \crr \circ \tau,
  \sh \circ \rr \circ \tau \mid \tau \in \{ \blsd,\ \blsp,\ \blud,\ \blup \} \}$. 
Then:
\begin{enumerate}[(i)]
\item
Hyperresolution and all refinements decide $\tr(M)$.
\item
All BUMG methods decide~$M$.
\end{enumerate}
\end{corollary}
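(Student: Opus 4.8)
The plan is to reduce the statement to the unlabelled theorem immediately preceding it (decidability of range-restricted \BS clauses with equality by hyperresolution, paramodulation and all refinements), by showing that on \BS input the transformation $\tr$ never leaves that class, and then to transfer the resulting decision procedure back to $M$ using the soundness and completeness results already proved. The single observation driving everything is that a \BS clause set has $\Sigma_f$ consisting of constants only, so no proper functional term occurs anywhere in $M$.

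First I would establish the key closure property: \emph{if every functional term in a clause set is a constant, then so is every functional term in $\tr(M)$}. This is checked componentwise. Every step that could manufacture a proper functional term does so by building a term $f(x_1,\ldots,x_n)$ ranging over $n$-ary $f \in \Sigma_f$ --- namely Step~(3) of $\crr$, Step~(5) of $\rr$, and Step~(1) of $\blsd$ and $\blsp$ --- and for a nullary $f$ each of these is either vacuous or produces only the ground atom $\dom(c)$, so no proper functional term appears. The abstraction in Step~(2) of $\rr$ yields heads $\dom(t_i)$ only for non-variable top-level terms $t_i$, which for \BS clauses are constants. Partial flattening $\pf$ and basic shifting $\bs$ act exclusively on proper functional terms and on atoms containing them, so on function-free input they contribute nothing beyond the reflexivity clause. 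The remaining clauses use only the fresh predicates $\dom$, $\subterm$, $\approx$, $\not\approx$ and the unary $p$, $\NOT{p}$ symbols applied to variables and constants, all of which are function-free. Since each constituent preserves the ``constants only'' invariant, so does any of the listed compositions, irrespective of the order in which $\rr$ (or $\crr$), $\sh$ and a blocking step are applied.

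With this in hand, part~(i) is immediate. By Theorem~\ref{result_combined_translations_completeness}(i), $\tr(M)$ is range-restricted; by the closure property it is a \BS clause set; and the equality predicate $\approx$ is the only interpreted symbol present (the $\not\approx$ introduced by blocking being an ordinary fresh predicate), so $\tr(M)$ is a set of range-restricted \BS clauses with equality. The preceding theorem then yields that hyperresolution together with paramodulation and all refinements decide $\tr(M)$.

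For part~(ii) I would combine termination on $\tr(M)$ with the transfer of the answer to $M$. Since BUMG methods are exactly the hyperresolution-style procedures covered by part~(i), every such method terminates on $\tr(M)$ and correctly reports its E-satisfiability. By Theorem~\ref{result_combined_translations_completeness}(iii) and its matching soundness direction, $M$ is E-satisfiable iff $\tr(M) \cup \{ x\approx x \gets \dom(x) \}$ is; the auxiliary clause is itself a range-restricted \BS clause and, being subsumed by reflexivity under equality reasoning exactly as in the proof of Corollary~\ref{cor:rr-E-completeness}, affects neither termination nor the outcome. Hence the pipeline ``apply $\tr$, then run any BUMG method'' decides the E-satisfiability of $M$. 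The only place demanding care is the componentwise closure check of the first step --- in particular confirming that composing $\rr$ after a blocking transformation does not reintroduce proper functional terms --- but since the blocking step leaves $\Sigma_f$ unchanged and $\rr$ builds functional terms only from members of $\Sigma_f$, this obstacle dissolves once the constant-only invariant is made explicit.
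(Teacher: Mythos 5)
Your proposal is correct and follows essentially the same route the paper intends: the corollary is derived directly from the preceding theorem on range-restricted \BS clauses with equality, together with the (implicit) observations that each transformation preserves the absence of proper functional terms on \BS input and that Theorem~\ref{result_combined_translations_completeness} transfers the answer back to $M$. You merely make explicit the componentwise closure check that the paper leaves unstated.
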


Since there are linear transformations of first-order formulae
into clausal form, and since all the $\tr$ transformations are effective
reductions of first-order clauses into range-restricted clauses,
we obtain the following result.

\begin{theorem}
\label{result_robust_decidability_BS}
\
\begin{enumerate}[(i)]
\item
There is a quadratic (linear), satisfiability equivalence preserving
transformation of any formula in the Bernays-Sch\"onfinkel class,
and any set of \BS clauses, into
a set of range-restricted \BS clauses.
\item
All procedures based on hyperresolution or BUMG decide the class
of \BS formulae and the class of \BS clauses.
\end{enumerate}
\end{theorem}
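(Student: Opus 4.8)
The plan is to derive the theorem by composing a standard linear clausification with one of the range-restricting transformations already studied, and then to read off decidability from the corollary stated just above. For part~(i), starting from a \BS formula $\exists^*\forall^*\psi$, I would first Skolemise the existential prefix; since every existential quantifier precedes every universal one, the Skolem terms are \emph{constants}, so the resulting universally quantified formula remains function-free apart from constants. A structure-preserving (definitional) transformation of the quantifier-free matrix $\psi$ into clausal form then produces, in linear time and of linear size, a clause set whose only function symbols are constants (plus fresh predicate symbols from the definitional step)---that is, a \BS clause set $M$. Each step preserves satisfiability (and E-satisfiability when $\approx$ occurs) and, decisively, none introduces a proper functional term.

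The second step applies $\rr$ (equally, $\crr$) to $M$ and must land in the class of \emph{range-restricted \BS} clauses. This requires a brief inspection of the transformation: in $\rr$, Step~(2) abstracts only non-variable top-level terms, which for a \BS clause are constants, and reinstates them via the abstraction substitution, yielding clauses of the shape $\dom(c)\gets P(\ldots)$; Steps~(3) and~(4) add only domain literals or function-free clauses; and Step~(5)---the only step that could build a proper functional term---is \emph{vacuous} for a nullary symbol, as there is no index $i$ with $1\le i\le 0$. The analogous remark applies to $\crr$, whose Step~(3) clauses collapse to $\dom(c)\gets\top$. Hence $\rr(M)$ contains no proper functional term and, by Proposition~\ref{result_rr_Horn_preservation}, is range-restricted, of linear size, and computable in quadratic (linear, using the careful variant noted after Proposition~\ref{result_rr_Horn_preservation}) time; so it is a set of range-restricted \BS clauses. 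Satisfiability equivalence is supplied by Proposition~\ref{prop:rr-completeness} and Corollary~\ref{cor:rr-E-completeness} together with the easy soundness directions, observing that the auxiliary clause $x\approx x\gets\dom(x)$ of Theorem~\ref{result_combined_translations_completeness}(iii) is itself range-restricted and \BS and may be retained in the output. Composing clausification with $\rr$ gives the required transformation and proves~(i).

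Part~(ii) then follows by assembly. Given a \BS formula, the linear clausification of part~(i) yields an (E-)equisatisfiable \BS clause set $M$, while a \BS clause set is already of this form; the preceding decidability theorem for range-restricted \BS clauses with equality, together with the corollary stated just above, gives that hyperresolution with paramodulation and all its refinements decide $\tr(M)$ and that every BUMG method decides $M$. Since the original formula, $M$ and $\tr(M)$ are pairwise (E-)equisatisfiable and all transformations are effective, any hyperresolution- or BUMG-based procedure halts with the correct verdict on every \BS input, which is exactly the assertion of~(ii).

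I expect the only point needing genuine care to be the \BS-preservation check above: one must confirm that no range-restriction step---in particular Step~(5) of $\rr$ and Step~(3) of $\crr$---ever creates a proper functional term from a function-free-plus-constants input, since it is precisely this that keeps the Herbrand universe of $\tr(M)$ finite and thereby makes the cited decidability theorem applicable. Everything else is a routine combination of results already established in the paper.
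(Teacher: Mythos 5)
Your proposal is correct and follows essentially the same route as the paper, which justifies the theorem only by the one-line remark that linear clausification composed with the $\tr$ transformations yields range-restricted \BS clauses, to which the preceding decidability theorem and corollary apply. Your additional checks---that Skolemisation of an $\exists^*\forall^*$ prefix introduces only constants, and that no step of $\rr$ or $\crr$ (in particular Step~(5) of $\rr$, vacuous for nullary symbols) creates a proper functional term---are exactly the details the paper leaves implicit, and they are verified correctly.
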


In \cite{SchmidtHustadt05b} a similar but different transformation is
used to prove this result for hyperresolution and \BS without equality.
In fact, what is crucial for deciding the \BS class is a grounding method.
This can be achieved by any form of range-restriction and
hyperresolution-like inferences. 
Theorem~\ref{result_robust_decidability_BS}.(ii) can therefore be 
strengthened to include also any instantiation-based method, in
particular also methods using on-the-fly instantiation such as semantic
Smullyan-type tableaux.

\section{Experimental Evaluation}
\label{sec:experiments}

We have implemented the transformations described in the previous
sections and carried out experiments on problems from the TPTP library,
Version~6.0.0.
The implementation, in SWI-Prolog, is called Yarralumla (Yet another
range-restriction avoiding loops under much less assumptions).
Since the transformations introduced in this paper are defined for
clausal problems we have selected for the experiments all the CNF problems
from the TPTP suite.

In our initial research~\citep{BaumgartnerSchmidt06} we used Yarralumla
with the MSPASS theorem prover,
Version~2.0g.1.4~\citep{HustadtSchmidt00b}.
As the extra features of MSPASS have in the mean time been integrated into the
SPASS theorem prover~\citep{WeidenbachSchmidtEtAl07} and SPASS has
significantly evolved since Version~2.0, for the present paper we
combined Yarralumla with SPASS Version~3.8d as a BUMG system.

For that purpose we modified the code of SPASS in a number of ways.
We added one new flag to activate splitting on
positive ground equality literals in positive non-Horn clauses.
The main inference loop was updapted so that finding a splitting clause
and applying splitting has highest priority (unchanged)
followed immediately by picking a non-positive blocking clause,
that is, clauses of the form $s\approx t \lor H_1 \lor \cdots \lor
H_m \gets B_1 \land \cdots \land B_k$ for $m \geq 0$ and $k > 0$,
and performing inferences with it.
The selection of splitting clauses was adapted so that positive
blocking clauses are always selected, when there are any.
Moreover, the first equality literal is split upon.
Positive blocking clauses are ground clauses of the form $s\approx
t \lor H_1 \lor \cdots \lor H_m$, where $m \geq 1$.
This adaptation ensures blocking is performed eagerly to
keep the set of ground terms small.
The tests with Yarralumla were performed using ordered resolution and
superposition with selection of at least one negative literal, forward and
backward rewriting, unlimited splitting and matching replacement
resolution, subsumption deletion and various other simplification rules.
This means the inferences are performed in an ordered
hyperresolution-style with eager splitting and forward and backward
ground rewriting.
The derivations constructed are thus BUMG tree derivations, the proofs
produced are BUMG refutation proofs, and the models returned
are BUMG models.

We also tested SPASS Version~3.8d in auto mode on the sample.
In auto mode SPASS used ordered resolution with dynamic
selection.
SPASS automatically turned off splitting for non-Horn clauses.
Dynamic selection means typically literals were only selected if
multiple maximal literals occur in a clause.
This means the behaviour of SPASS in auto mode was very different to
that of SPASS-Yarralumla, which always selected a literal in clauses
with non-empty negative part.
The changes to SPASS in SPASS-Yarralumla meant that splitting was
performed eagerly and blocking clauses were targeted, which was not the
case with SPASS in auto mode. 
We tested SPASS in auto mode only on the original files (translated from
TPTP syntax to SPASS syntax).

The experiments were run on a cluster of 128 Dell PowerEdge M610
Blade Servers each with two Intel Xeon E5620 2.4~GHz processors and 48 GiB main
memory each.
The time limit was ten minutes (CPU time).

SPASS-Yarralumla can be downloaded from\\ \url{http://www.cs.man.ac.uk/~schmidt/spass-yarralumla/}.

\subsection{Results}

\begin{sidewaystable}[!htbp]
\caption{Number of problems solved on satisfiable problems, by TPTP categories.}
\label{table_count_sat}

\smallskip

\end{sidewaystable}

Tables~\ref{table_count_sat} and~\ref{table_count_ratings_sat_summary}
summarize the results for satisfiable clausal
problems in the TPTP library, measuring the number of problems solved
with in the time limit.
The columns with the heading `\#' give the number of
problems in the TPTP categories and the different TPTP rating ranges. 
The subsequent columns give the number of problems solved within the
time limit.
The results are presented for the different BUMG methods that were used.
For example, $\sh\circ\rr\circ\blsd$ refers to the method based on the
transformation defined by the new range-restriction transformation,
shifting and subterm domain blocking.
To evaluate the effect of the different forms of blocking the results
are grouped into groups of five: no blocking, subterm domain blocking
($\blsd$), unrestricted domain blocking ($\blud$), subterm predicate
blocking ($\blsp$) and unrestricted predicate blocking ($\blsp$).
In each group the first column provides the \emph{baseline} for that group.
The last column with the heading `auto' gives the results for runs
of SPASS Version~3.8d in auto mode on the original input files.
The runtimes for the problems solved spanned the whole range, from less
than one second to all of the time allowed.

\begin{sidewaystable}[!htbp]
\caption{Number of problems solved on satisfiable problems, by TPTP problem rating.}
\label{table_count_ratings_sat_summary}

\smallskip


Note: ${}^\dagger$1017 = 150 SAT + 28 OPN + 839 UNK
\end{sidewaystable}

The best results in each group in each row are highlighted in bold font.
The underlined values are the best results for all 
methods including SPASS in auto mode.
As expected the worst results in each group were obtained for the
baseline transformations without blocking.
This confirms the expectation that blocking is an essential technique
for BUMG methods.
Among the different blocking techniques the best
results were obtained with unrestricted domain blocking in all four groups.
Overall, the best result was obtained for the combination with
$\rr$ and shifting, i.e., $\sh\circ\rr\circ\blud$, solving 6.0\%~more
problems than the second best method, $\crr\circ\blud$ using the classical range-restriction
transformation without shifting, and nearly 11\%~more problems than
the transformations $\rr\circ\blud$ and $\sh\circ\crr\circ\blud$.
This means shifting had a significant positive effect in combination
with the new range-restriction transformation, but less so in
combination with classical range-restriction.
The positive effect of shifting could also be seen for the number of
problems solved without blocking for $\rr$ and $\sh\circ\rr$ (34\%~improvement).

The good results for $\crr\circ\blud$ show the value of classical range-restriction.
In the LAT category, $\crr\circ\blud$ solved 32~problems, whereas
$\sh\circ\rr\circ\blud$ solved only 5~problems.
This seems to indicate there was a trade-off between using the 
$\crr$ transformation and the $\rr$ transformation in combination with shifting, but also
showed the virtues of unrestricted domain blocking as a universal
technique for BUMG.
SPASS in auto mode fared very well in the SWV category, where
79~problems were solved compared to 7--8~problems for the best BUMG methods.
Overall SPASS in auto mode solved 9\%~fewer problems than the best
BUMG method $\sh\circ\rr\circ\blud$.

Looking at the top half of Table~\ref{table_count_ratings_sat_summary}
(up to difficulty rating of $0.40$), the BUMG method
based on $\sh\circ\rr\circ\blud$ fared best, but for problems more
difficult (up to a rating of $0.70$) the performance deteriorated and
the method $\crr\circ\blud$ solved the highest number of problems.
For problems with ratings higher than $0.70$ SPASS in auto mode solved
significantly more problems than the BUMG methods.
One problem with rating $1.00$ was solved by the $\crr\circ\blud$ method (namely,
GRP741-1 in 121.86~seconds).
Problems in the TPTP library with rating~1.00 have not yet
been solved by any other prover.

\begin{table}[!htbp]
\caption{Evaluation of blocking techniques.}
\label{table_uniquely_solved_sat_unsat_wrt_baseline}
\begin{tabular}{l@{\quad}l@{\quad}r@{\quad}r@{\quad}r@{\quad}r@{\quad}r@{\quad\quad}r@{\quad}r@{\quad}r@{\quad}r@{\quad}r@{\quad}r@{\quad\quad}r@{\quad}r@{\quad}r@{\quad}r@{\quad}r@{\quad}r@{\quad\quad}r@{\quad}r@{\quad}r@{\quad}r@{\quad}r@{\ }r}
\textbf{Satisfiable}
&    
Baseline & 
$\blsd$ & 
$\blud$ & 
$\blsp$ & 
$\blup$ & 
\\ \hline \hline
& $\rr$  
&    -0 /
   +212 
&\bf -4 / 
   +238 
&    -0 /
   +169 
&    -1 / 
    +81 
\\
            & $\sh\circ\rr$ 
&    -1 / 
   +195 
&\bf -5 / 
   +225 
&    -2 / 
   +168 
&    -2 / 
    +77 
\\
            & $\crr$ 
&    -0 /
   +249 
&\bf -4 / 
   +294 
&    -0 /
   +184 
&    -3 / 
    +78 
\\
            & $\sh\circ\crr$ 
&    -0 /
   +226 
&\bf -5 / 
   +274 
&    -0 /
   +168 
&    -3 / 
    +77 
\\[2\abovedisplayskip]
\textbf{Unsatisfiable} &
\\ \hline \hline
& $\rr$  
&  -211 / 
    +78 
&  -315 / 
    \bf +83 
&  -100 / 
    +61 
&   {\bf -77} / 
    +37 
\\
              & $\sh\circ\rr$ 
&  -188 / 
   +106 
&  -225 / 
   \bf +126 
&   -81 / 
    +87 
&   {\bf -34} / 
    +52 
\\
              & $\crr$ 
&   -65 / 
   +170 
&  -105 / 
   \bf +242 
&   -32 / 
    +78 
&   {\bf -26} / 
    +24 
\\
              & $\sh\circ\crr$ 
&   -52 / 
   +163 
&   -98 / 
   \bf +190 
&   -30 / 
    +57 
&   {\bf -16} / 
    +15 
\end{tabular}
\end{table}

Table~\ref{table_uniquely_solved_sat_unsat_wrt_baseline} presents an
evaluation of the different blocking techniques,
listing the number of problems lost and the number of problems gained
against the baseline methods in each group.
The results confirm the significant positive effect of unrestricted
domain blocking for satisfiable problems.

Analysis of the gain and loss of the method based on
$\sh\circ\rr\circ\blud$ against the other methods gave these results:
Against $\rr\circ\blud$ 66~problems were gained and 20~problems lost;
against $\sh\circ\crr\circ\blud$ the gain/loss was \text{+90/-45} and against
$\crr\circ\blud$ it was +85/-59.
This non-uniformity suggests each variation of range-restriction had
the potential to solve some problems not solvable within the time
limit by $\sh\circ\rr$ with unrestricted blocking.
The biggest variation was against SPASS in auto mode, where
169 problems were gained and 130 problems were lost.

\begin{sidewaystable}[!htbp]
\caption{Uniquely solved problems.}
\label{table_uniquely_solved_sat}

\smallskip
\begin{tabular}{l@{\quad}r@{\quad}r@{\quad}r@{\quad}r@{\quad}r@{\quad}r@{\quad\quad}r@{\quad}r@{\quad}r@{\quad}r@{\quad}r@{\ }r@{\quad\quad}r@{\quad}r@{\quad}r@{\quad}r@{\quad}r@{\ }r@{\quad\quad}r@{\quad}r@{\quad}r@{\quad}r@{\quad}r@{\ }r@{\quad}r@{\ }r}
\textbf{Satisfiable}
\rule{0 pt}{40pt}
&    
\begin{rotate}{45} $\rr$ \end{rotate} & 
\begin{rotate}{45} $\rr\circ\blsd$ \end{rotate} &  
\begin{rotate}{45} $\rr\circ\blud$ \end{rotate} & 
\begin{rotate}{45} $\rr\circ\blsp$ \end{rotate} & 
\begin{rotate}{45} $\rr\circ\blup$ \end{rotate} & 
\begin{rotate}{45} Total\end{rotate} &
\begin{rotate}{45} $\sh\circ\rr$ \end{rotate} & 
\begin{rotate}{45} $\sh\circ\rr\circ\blsd$ \end{rotate} & 
\begin{rotate}{45} $\sh\circ\rr\circ\blud$ \end{rotate} & 
\begin{rotate}{45} $\sh\circ\rr\circ\blsp$ \end{rotate} & 
\begin{rotate}{45} $\sh\circ\rr\circ\blup$ \end{rotate} & 
\begin{rotate}{45} Total\end{rotate} &
\begin{rotate}{45} $\crr$ \end{rotate} & 
\begin{rotate}{45} $\crr\circ\blsd$ \end{rotate} & 
\begin{rotate}{45} $\crr\circ\blud$ \end{rotate} & 
\begin{rotate}{45} $\crr\circ\blsp$ \end{rotate} & 
\begin{rotate}{45} $\crr\circ\blup$ \end{rotate} & 
\begin{rotate}{45} Total\end{rotate} &
\begin{rotate}{45} $\sh\circ\crr$ \end{rotate} & 
\begin{rotate}{45} $\sh\circ\crr\circ\blsd$ \end{rotate} & 
\begin{rotate}{45} $\sh\circ\crr\circ\blud$ \end{rotate} & 
\begin{rotate}{45} $\sh\circ\crr\circ\blsp$ \end{rotate} & 
\begin{rotate}{45} $\sh\circ\crr\circ\blup$ \end{rotate} & 
\begin{rotate}{45} Total\end{rotate} &
\begin{rotate}{45} auto \end{rotate} & 
\begin{rotate}{45} Total\end{rotate}
\\ \hline \hline
All methods
&     - 
&     1 
&     - 
&     - 
&     - 
&
&     - 
&     - 
&     4 
&     - 
&     - 
&
&     - 
&     - 
&    11 
&     - 
&     - 
&
&     - 
&     4 
&     1 
&     - 
&     - 
&
&   \bf\underline{115} 
&   136 
\\
All BUMG methods
&     - 
&     1 
&     2 
&     - 
&     - 
&
&     - 
&     - 
&     6 
&     - 
&     - 
&
&     - 
&     - 
&    \bf\underline{11} 
&     - 
&     - 
&
&     - 
&     4 
&     1 
&     - 
&     - 
&
&
&    25 
\\
All BUMG, by group
&     - 
&    12 
&    \bf 28 
&     - 
&     - 
&    40 
&     - 
&     4 
&    \bf 25 
&     1 
&     2 
&    32 
&     - 
&    10 
&    \bf 57 
&     - 
&     - 
&    67 
&     - 
&    12 
&    \bf 60 
&     - 
&     1 
&    73 
\\
\\[2\abovedisplayskip]
\textbf{Unsatisfiable} &
\\ \hline \hline
All methods
&     - 
&     - 
&     1 
&     - 
&     - 
&
&     - 
&     - 
&     - 
&     - 
&     - 
&
&     - 
&     - 
&     - 
&     - 
&     - 
&
&     - 
&     - 
&     - 
&     - 
&     - 
&
&  \bf\underline{1779} 
&  1780 
\\
All BUMG methods
&    \bf\underline{17} 
&     4 
&     5 
&     2 
&     2 
&
&     3 
&     5 
&     3 
&     - 
&     3 
&
&     - 
&     3 
&     3 
&     2 
&     - 
&
&     - 
&     1 
&     3 
&     - 
&     - 
&
&
&    56 
\\
All BUMG, by group
&    \bf 32 
&    15 
&    18 
&    23 
&     4 
&    92 
&    16 
&    24 
&    \bf 41 
&    29 
&    14 
&   124 
&     4 
&    23 
&    \bf 90 
&    10 
&     3 
&   130 
&     8 
&    31 
&    \bf 71 
&     6 
&     2 
&   118 
\end{tabular}
\end{sidewaystable}

Table~\ref{table_uniquely_solved_sat} displays how many problems were
uniquely solved.
The first row lists how many problems were uniquely solved over all
methods including SPASS in auto mode.
Although two of the BUMG methods with unrestricted domain blocking
fared better than SPASS in auto mode, the latter solved a
significant number of problems that none of the BUMG could solve
(namely, 115~problems, or 27.5\% of the problems solved by SPASS in auto
mode, or 10.2\% of all satisfiable problems).
This reflected the orthogonality of the underlying methods.
Analogously, the relatively low number of problems uniquely solved
by the BUMG methods (21 problems, i.e., 1.9\% of all satisfiable
problems), which is also apparent from the number of problems
solved uniquely among the BUMG methods in the second row (25 or 2.2\%
of all satisfiable problems) can be attributed to the similarity of the
underlying methods.
An analysis of the number of uniquely solved problems per group of
BUMG methods in the third row of the table highlighted the importance
of unrestricted domain blocking.
While overall no problems were only solved with unary predicate
blocking techniques, within the groups there were four problems solved
only with unary predicate blocking.

\begin{table}[!htbp]
\caption{Average increase in the size of input files.}
\label{table_input_size_statistics}
\begin{tabular}[t]{lr}
Method & Avg 
\\ \hline \hline
$\rr$ 
& 1.8 \\ 
$\rr\circ\blsd$ 
& 2.4 \\ 
$\rr\circ\blud$ 
& 1.9 \\ 
$\rr\circ\blsp$ 
& 2.4 \\ 
$\rr\circ\blup$ 
& 1.9 \\ 
$\sh\circ\rr$ 
& 1.7 \\ 
$\sh\circ\rr\circ\blsd$ 
& 2.3 \\ 
$\sh\circ\rr\circ\blud$ 
& 1.8 \\ 
$\sh\circ\rr\circ\blsp$ 
& 2.3 \\ 
$\sh\circ\rr\circ\blup$ 
& 1.7 
\end{tabular}
\qquad
\begin{tabular}[t]{lr}
Method & Avg 
\\ \hline \hline
$\crr$ 
& 1.1 \\ 
$\crr\circ\blsd$ 
& 1.7 \\ 
$\crr\circ\blud$ 
& 1.2 \\ 
$\crr\circ\blsp$ 
& 1.7 \\ 
$\crr\circ\blup$ 
& 1.2 \\ 
$\sh\circ\crr$ 
& 1.2 \\ 
$\sh\circ\crr\circ\blsd$ 
& 1.9 \\ 
$\sh\circ\crr\circ\blud$ 
& 1.3 \\ 
$\sh\circ\crr\circ\blsp$ 
& 1.8 \\ 
$\sh\circ\crr\circ\blup$ 
& 1.3 
\end{tabular}
\qquad
\begin{tabular}[t]{lr}
Method & Avg 
\\ \hline \hline
auto 
& 0 
\end{tabular}
\end{table}

Table~\ref{table_input_size_statistics} gives an impression of the
increase in the size of the input files caused by the transformations.
Although the file sizes were measured after all comments and
white space were removed, variations is name lengths distort the values
slightly (which can be seen in the values for shifting).
The results therefore need to be interpreted cautiously.
The average increase in file size 
does show a significant effect on the size of the problem for the
new range-restriction transformations and also subterm blocking
(both subterm domain blocking and subterm predicate blocking).
The largest increase in size was observed for the problem SYO600-1
(13.7 fold increase), which contained 380 predicate symbols with arity
up to 64, 2 constants and no non-constant function symbols.
The main cause for this increase was the large number of clauses 
added in Step~(4) of the $\rr$~transformation.
For each of the 284 predicate symbols with arity 64 in the problem,
64 clauses were added in Step~(4).
This is a large number.
In contrast for the $\crr$ transformations the increase in size was
negligible, and also, generally, it was significantly lower.
Despite its positive virtues this shows a downside of the $\rr$
transformation.
For problems containing a large number of function symbols with high
arity, Step~(5) similarly adds many clauses, even though the
transformation overall is still effective.

Analysis of the problems solved without any form of blocking revealed
a large number belonged to the Bernays-Sch\"onfinkel class: 
131/176 (74\%) for $\rr$, 132/236 (56\%) for $\sh\circ\rr$, 
133/140 (95\%) for $\crr$, and 134/142 (94\%) for $\sh\circ\rr$.
These results confirmed the expectation that more problems not
solvable with the $\crr$ transformation can be solved with the $\rr$
transformation and the benefits of reducing the number of terms created.

Although the main purpose of BUMG methods is \emph{disproving} theorems and
generating models for satisfiable problems, for completeness we
report in Tables~\ref{table_count_unsat}
and~\ref{table_count_ratings_unsat_summary} the results for
unsatisfiable clausal TPTP problems.
The results were not as uniform as for satisfiable problems.
However some general observations can be made.
SPASS in auto mode fared best overall, and did so in all
TPTP categories and each problem rating category.
For unsatisfiable problems the drawback of BUMG methods is that clauses
need to be exhaustively grounded and each branch in the derivation tree
needs to be closed.
The dominance of SPASS in auto mode is thus not surprising.

For the BUMG methods, 
a general deterioration in performance
could be observed for shifting, when
comparing the results for the groups with baselines $\sh\circ\rr$
and $\sh\circ\crr$ to the respective groups without shifting.
This is plausible because shifting leads to fewer negative literals
in clauses and more positive literals thus reducing the constraining
effect and leading to more splitting.
For problems with higher rating, shifting did seem to have a positive effect;
for instance, in the $(0.40,0.50]$ range, $\sh\circ\rr$ solved
70~problems whereas $\rr$ solved 32 problems.

Within the BUMG groups we expected best performance for the
baseline transformations, because these do not involve blocking and
performing many blocking steps lead to a significant overhead. 
However only for the first group the $\rr$ transformation fared best.
In combination with classical range-restriction $\crr$, somewhat
surprisingly, the best results
were obtained with unrestricted domain blocking, the most expensive form
of blocking, because it is applicable to any terms.
Among the blocking techniques in each case the highest
gain was obtained for unrestricted domain blocking
(see~Table~\ref{table_uniquely_solved_sat_unsat_wrt_baseline}).
However also the greatest loss was observed for this blocking
technique.
The smallest loss and lowest gain was obtained for $\blup$ blocking.
The high loss for $\blud$ could be a reflection of the high increase
in splitting steps preventing quicker detection of contradictions.
Analogously the small loss for $\blup$ could be attributable to the
smallest number of additional splitting steps among the blocking techniques.
The high gain for $\blud$ blocking suggests the inference process 
panned out significantly differently leading to solutions not
found with the other techniques.
This seems to be supported by the results in the third row of
Table~\ref{table_uniquely_solved_sat} according to which, with one
exception, the largest number of uniquely solved problems in each group
was obtained with $\blud$ blocking.
The exception was the first group, where $\rr$ led to the largest number of uniquely
solved problems.
Among all the BUMG methods, $\rr$ solved the largest number of problems
not solved by any of the other methods.
However, these results pale against the number of uniquely solved
problems by SPASS in auto mode.
Only one problem was solved by a BUMG method which was not solved by
SPASS in auto mode.

\begin{sidewaystable}[!htbp]
\caption{Number of problems solved on unsatisfiable clausal TPTP problems.}
\label{table_count_unsat}

\smallskip


Note: ${}^\dagger$1516 = 649 UNS + 28 OPN + 839 UNK
\end{sidewaystable}

\subsection{Findings}

Several findings can be drawn from the results.
The results have confirmed our expectation that unrestricted domain
blocking is a powerful technique, which helps discover
finite models more often than with the other blocking techniques.
The results suggest the technique is indispensable for bottom-up model
generation.
Both in combination with the new range-restricting
transformation, and the classical range-restricting transformation, good
results have been obtained.
Overall, the method based on new range-restriction, shifting and unrestricted
domain blocking performed best on the sample.
On satisfiable problems with higher difficulty rating this method was however
gradually edged out by the method based on classical range-restriction
and unrestricted domain blocking.
This suggests there is a trade-off between the
$\rr$~transformation, which is based on a non-trivial transformation
but does restrict the creation of terms, and the simpler $\crr$
transformation, which has to rely on blocking to restrict
the creation of terms.

The results for subterm domain blocking were good and often not far behind
unrestricted domain blocking for satisfiable problems.
In contrast, predicate blocking seems not to be effective on many
problems.
We attribute this to the nature of the problems in the TPTP library.

An investigation with SPASS-Yarralumla on translations of modal logic
problems has revealed a different picture~\citep{SchmidtStellRydeheard14b}.
There, the best performance was obtained with subterm domain blocking for
both satisfiable and unsatisfiable problems.
Better results than for unrestricted domain blocking were also obtained
with subterm predicate blocking and unrestricted predicate blocking.
Better performances for subterm and predicate blocking are
also expected on problems stemming from (cyclic) description logic
knowledge bases.
Experiments with blocking restricted by excluding a finite subset
of the domain have shown better results than for unrestricted domain
blocking for consistency testing on a
large corpus of ontologies~\citep{KhodadadiSchmidtTishkovsky13a}.
The better performance for restricted forms of blocking on modal and
description logic problems can be attributed to mainstream modal and
description logics having the finite tree model property.
This means every satisfiable formula holds in a model based on a finite
tree, which is not a property of first-order formulae.

The results showed BUMG methods were good for disproving theorems
and generating models for satisfiable problems.
For unsatisfiable problems BUMG methods were however significantly
less efficient than SPASS in auto mode.
For theorem proving purposes a limitation of BUMG methods is that
they require full grounding.
It can be seen already from very small unsatisfiable examples that a
complete BUMG derivation tree can be very large, whereas resolution
proofs are significantly shorter.

Compared to resolution, an advantage of BUMG methods for satisfiable
problems is the division of the search space into branches
which are individually constructed and individually processed.
As a consequence, if the right decisions are made at branching points
models can be found more quickly.
When the branching point decisions are less optimal the performance
can deteriorate dramatically, particularly if the search is trapped in
a branch with only infinite models.
This could be another explanation for the lower success rate of the BUMG
methods observed for more difficult satisfiable problems.
For problems where only infinite models exist, clearly other methods
are better.

\section{Conclusions}
\label{sec:conclusions}

We have presented and tested a number of enhancements for BUMG methods.
An important aspect is that our enhancements exploit the strengths
of readily available BUMG system with only modest modifications.
Our range restriction technique is a refinement of existing
transformations to range-restricted clauses in that terms are added
to the domain of interpretation on a `by need' basis.
Moreover, we have presented methods that allow us to extend BUMG
methods with blocking techniques related to loop checking
techniques with a long history in the more specialized setting of
modal and description logics.

The experimental evaluation has shown blocking techniques are indispensable in BUMG
methods for satisfiable problems.
In particular, unrestricted domain blocking turned out to be the most
powerful technique on problems from the TPTP library.
Limiting the creation of terms during the inference process by using the
new range restricting transformation paid off, leading to better results.
It is particularly advisable together with the shifting transformation.
The experimental results however also show that classical range
restriction together with unrestricted blocking is a good complementary
method.
Because model generation methods are not just aimed at showing the
existence of models but are built to construct and return models, when
no models exists the entire search space must be traversed, which 
has led to inferior performance compared to saturation-based resolution.

Our bottom-up model generation approach is especially suitable for generating small models and
it is possible to show the approach using unrestricted domain blocking
allows us to compute finite models when they exist.
The models produced by subterm blocking and predicate blocking are not
as small as those produced by unrestricted domain blocking.  
In particular, the generated models do not need to be Herbrand models.
It follows from how the transformations work that the generated models
are quasi-Herbrand models, in the following sense. Whenever $\dom(s)$ and
$\dom(t)$ hold in the (Herbrand) model constructed by the
BUMG method, then (as in Herbrand
interpretations) the terms $s$ and $t$ are mapped to themselves in the
associated (possibly non-Herbrand) model. Reconsidering the example in
the Introduction of the two unit clauses $\ID P(\ID a)$ and $\ID
Q(\ID b)$, the associated model maps $\ID a$ and $\ID b$ to
themselves, regardless as to which transformations are applied as
long as it includes a form of subterm blocking.
In this way, more informative models are produced than those computed
by, for example, MACE- and SEM-style finite model searchers (and also
unrestricted domain blocking).
From an applications perspective, this can be an advantage because larger
models are more likely to be helpful to a user debugging mistakes
in the formal specification of a program or protocol, or an ontology
engineer trying to discover why an expected entailment does not follow
from an ontology.

Research in automated theorem proving on developing decision procedures
has concentrated on developing refinements of resolution, mainly
ordering refinements, for deciding solvable fragments of first-order
logic.
Fragments decidable with ordered resolution are complementary to the
fragments that can be decided by refinements using the techniques
presented in this paper.
We have thus extended the set of techniques available for resolution methods
to turn them into more effective and efficient (terminating) automated
reasoning methods.
In particular, we have shown that all procedures based on
hyperresolution, or BUMG methods, can decide the Bernays-Sch\"onfinkel class
and the class of \BS clauses with equality.

Studying how well the ideas and techniques discussed in this paper
can be exploited and behave in dedicated BUMG provers, tableau-based
provers and other provers (including resolution-based provers) is
very important but is beyond the scope of the present paper.
Initial results with another prover,
Darwin~\citep{Baumgartner:etal:Darwin:IJAIT:2006},
are very encouraging.
An in-depth comparison and analysis of BUMG approaches with our
techniques and MACE-style or SEM-style model generation
would also be of interest.
Another source for future work is to combine the presented transformations
with other BUMG techniques, such as 
magic sets transformations~\citep{HasegawaInoueOhtaKoshimura97,Stickel94}, 
a typed version of range-restriction~\citep{Baumgartner:Furbach:Stolzenburg:RestartMEAnswers:AI:97},
and minimal model
computation~\citep{BryYahya00,BryTorge98,PapacchiniSchmidt11}.
Having been designed to be generic, we believe that our transformations
carry over to formalisms with default negation, which could provide
a possible basis for enhancements to answer-set programming systems.

\paragraph{Acknowledgements.}
The second author is grateful to Christoph Weidenbach and Uwe
Waldmann for hosting her during 2010 and 2013--2014.
In this time the implementation of SPASS-yarralumla was completed
and the experimental evaluation was undertaken on the cluster of the
Max-Planck-Institut f\"ur Informatik, Saarbr\"ucken.
We thank Uli Furbach, Dmitry Tishkovsky, Uwe Waldmann and Christoph Weidenbach for
useful discussions and comments on this research.
This work was supported by the UK Engineering and Physical Sciences
Research Council (EPSRC) (grants EP/F068530/1 and EP/H043748/1),
NICTA, Canberra, Australia, and the Max-Planck-Institut, Saarbr\"ucken, Germany.

\bibliographystyle{abbrvnat}

\end{document}